\title{Deep Diffusion Maps}
\author[1]{Sergio García-Heredia}
\author[1]{Carlos M. Alaíz}
\author[1]{Ángela Fernández}
\affil[1]{Computer Science Department, Universidad Autónoma de Madrid}
\date{\today}
\begin{document}

\maketitle

\section*{Abstract}

One of the fundamental problems within the field of machine learning is dimensionality reduction. Dimensionality reduction methods make it possible to combat the so-called curse of dimensionality, visualize high-dimensional data and, in general, improve the efficiency of storing and processing large data sets. One of the best-known nonlinear dimensionality reduction methods is Diffusion Maps. However, despite their virtues, both Diffusion Maps and many other manifold learning methods based on the spectral decomposition of kernel matrices have drawbacks such as the inability to apply them to data outside the initial set, their computational complexity, and high memory costs for large data sets. In this work, we propose to alleviate these problems by resorting to deep learning. Specifically, a new formulation of Diffusion Maps embedding is offered as a solution to a certain unconstrained minimization problem and, based on it, a cost function to train a neural network which computes Diffusion Maps embedding ---both inside and outside the training sample--- without the need to perform any spectral decomposition. The capabilities of this approach are compared on different data sets, both real and synthetic, with those of Diffusion Maps and the Nystr\"om method.

\textbf{Keywords:} dimensionality reduction, manifold learning, Diffusion Maps, deep learning, neural networks, Deep Diffusion Maps.

\section{Introduction}\label{sec:introduction}
Machine Learning (ML) is a branch of artificial intelligence focused on the development of algorithms that allow computers to learn, that is, improve their performance in a specific task, based on data. This capability has led ML to become one of the most important and active areas of technological research and development nowadays. It is undoubtedly a key tool for harnessing the vast amount of data generated and collected every day in the era of big data. However, within this field, one of the key challenges is the handling of high-dimensional samples, where algorithms face difficulties due to the so-called curse of dimensionality.

The increase prevalence of high-dimensional data has led to the development of an entire area within ML called Dimensionality Reduction (DR). Its objective is to find a new representation of the data, called embedding, with fewer variables or features than the original representation, while trying not to lose information relevant to the analysis or the task at hand. This is a challenge that has been addressed in different ways in the literature. These techniques, in addition to facilitating the work of other ML algorithms, have additional applications, such as allowing the visualization and better interpretation of data or reducing its storage and subsequent processing costs.

A popular DR method, framed within the so-called manifold learning methods, is Diffusion Maps (DM;~\cite{coifman_diffusion_2006}). This approach is inspired by the physical phenomenon of diffusion. It starts by constructing an affinity matrix based on the local distances between points and usually defined by a kernel function, which allows to capture the geometric structure of the underlying manifold of the data. Then, a random walk is defined over a graph whose nodes are the data points and whose weights are given by the kernel matrix. From this random walk, a distance on the graph, namely the diffusion distance, is defined, together with a representation of the data capable of preserving, to some extent, such distance. This ability to preserve local relationships between points is one of the fundamental differences between DM and other, so-called global, methods such as PCA~\cite{pearson_liii_1901}.

Despite their advantages, both DM and many other manifold learning methods based on the spectral decomposition of kernel matrices present significant challenges. These include their high computational complexity and high memory costs, especially when dealing with large data sets. Furthermore, the application of these methods to new points outside the initial data sample (out-of-sample points) is problematic, since, in principle, it requires applying the complete algorithm again on the data set extended to the new points. This is unfeasible when the data set is very large or when working with a constant flow of new data.

To address these problems, there are methods that approximate the decomposition of the kernel matrices based on a representative subsample of the data, such as the well-known Nystr\"om method~\cite{williams_using_2000, bengio_out--sample_2003}. A more recent alternative consists in using Deep Learning (DL), that is, artificial neural networks~\cite{gong_neural_2006}. DL has a well-proven ability to automatically and efficiently capture complex structures in large volumes of data of all kinds, from simple tabular data to images, videos or sequential data, such as audio or text. Furthermore, neural networks, being parametric methods, provide an explicit function that can be directly and very efficiently applied to out-of-sample points, reduce the need to calculate and store the spectral decomposition of affinity matrices, and they can be easily adapted to different types of data depending on their architecture.

In this work, we propose to alleviate the aforementioned problems with DM by using DL. To this end, we offer a new formulation of the DM embedding as a solution to a certain unconstrained minimization problem. This formulation allows to define a cost function with which to train a neural network to compute the DM embedding both for points inside and outside the training sample, without requiring any spectral decomposition. The capabilities of this approach are compared on different data sets, both real and synthetic, against those of Diffusion Maps and the Nystr\"om method.


The paper is structured as follows. In Section~\ref{sec:spectral_methods}, a theoretical framework is presented on kernel-based spectral methods of manifold learning, Laplacian Eigenmaps, DM and the Nystr\"om method applied to DM. Section~\ref{sec:related_work} explains how spectral methods can be implemented in a general way using DL, and it includes a brief review of the existing works in this regard. Next, in Section~\ref{sec:ddm}, the proposal of this work is presented, which consists of a new formulation of the embedding of DM as a solution to a certain unconstrained minimization problem and the definition of a cost function from it with which to train neural networks. Subsequently, in Section~\ref{sec:experiments}, a series of experiments are proposed to evaluate the presented method, comparing it with DM and the Nystr\"om method, and the obtained results are exposed. Finally, in Section~\ref{sec:conclusions}, the conclusions of the work are collected.

\section{Kernel-based spectral methods}\label{sec:spectral_methods}
Within the set of nonlinear dimensionality reduction methods, there is a subset of them, which will be referred to as kernel-based spectral methods, whose embeddings are calculated from the spectral decomposition of a kernel matrix or some transformation of it. A kernel matrix, $\vb{K}$, is a way to capture the local geometry of a data set, $\Bqty{\vb{x}_i}_{i=1}^N$, from the local similarities between points, measured by what is called a kernel function, $\mathcal{K}:\R^D\times\R^D\to\R$. The elements of the kernel matrix will then be $K_{i,j} = \mathcal{K}(\vb{x}_{i}, \vb{x}_{j})$, for $i, j = 1, \dotsc, N$. This kernel function is typically a function of the Euclidean distance between points in the feature space, such that closer points are considered more similar.
The choice of the kernel function should be guided by the specific application in mind. Typically, the kernel function decays rapidly for increasing values of the distance between points, so the similarity between distant points is close to zero. This is a significant difference from other ``global'' methods, such as PCA~\cite{pearson_liii_1901}, where all correlations between data points are taken into account. Kernel-based spectral methods, on the other hand, start from the idea that, in many applications, the separation between very distant points is irrelevant, so it does not need to be preserved; only the relationships between nearby points constitute the meaningful information in the data set. These methods have the advantages of being nonlinear, capable of generating embeddings that better reflect the local geometry of the data and, due to their local nature, potentially more robust to outliers.

\subsection{Laplacian Eigenmaps}
Laplacian Eigenmaps (LE;~\cite{dietterich_laplacian_2002}) is part of a subset of kernel-based spectral methods that interpret the kernel matrix, $\vb{K}$, as the weight matrix, $\vb{W}$, of the edges of a graph whose nodes are the different data points. Thus, more similar data points are more strongly connected. The embedding vectors, $\Bqty{\vb*{\gamma}_i}_{i=1}^N$, of LE are the solution to the following minimization problem:
\begin{equation}\label{eq:le_min_problem_1}
	\argmin_{\vb*{\gamma}_1,\ldots, \vb*{\gamma}_N\in\R^d}\sum_{i=1}^N\sum_{j=1}^N W_{i,j}\norm{\vb*{\gamma}_i-\vb*{\gamma}_j}^2
\end{equation}
such that
\begin{equation}\label{eq:le_min_problem_restrictions}
    \vb*{\Gamma} \vb{D}_{\vb{W}} \vb*{\Gamma}^\top = \I_d\qq{and} \vb*{\Gamma} \vb{D}_{\vb{W}}\vb{1}_N = \vb{0}_d,
\end{equation}
where $\vb*{\Gamma}\in\R^{d\times N}$ is the matrix whose columns are the vectors $\Bqty{\vb*{\gamma}_i}_{i=1}^N$, $\vb{W}=\vb{K}$ is the weight matrix of the data graph, $\vb{D}_{\vb{W}}\equiv D(\vb{W}) = \operatorname{diag}(d_1(\vb{W}),\dotsc, d_N(\vb{W}))$ is the so-called degree matrix, with $d_i(\vb{W}) = \sum_j W_{i, j}$ the degree of the $i$-th node; and $\vb{1}_N$ and $\vb{0}_d$ are the vectors of ones and zeros of dimensions $N$ and $d$, respectively. LE therefore look for an embedding where the most strongly connected, i.e., most similar, points in the graph are closest. The first constraint prevents solutions with rank less than $d$, including the trivial $\vb*{\Gamma} = 0$, and fixes their normalization,
while the second constraint prevents the other possible trivial solution, in which one of the coordinates has the same value for all points.

The name of LE is due to the fact that
\begin{equation}
	\sum_{i=1}^N\sum_{j=1}^N W_{i,j}\norm{\vb*{\gamma}_i-\vb*{\gamma}_j}^2 = \Trace\pqty{\vb*{\Gamma} \vb{L} \vb*{\Gamma}^\top},
\end{equation}
where $\vb{L}=\vb{D}_{\vb{W}}-\vb{W}$ is the so-called Laplacian of the graph, which is a positive semidefinite quadratic form if $W_{i,j}> 0,\;\forall i,j$. Therefore, the minimization problem \eqref{eq:le_min_problem_1} can be expressed as
\begin{equation}\label{eq:le_min_problem_2}
	\argmin_{\vb*{\Gamma}\in\R^{d\times N}}\Trace\pqty{\vb*{\Gamma} \vb{L} \vb*{\Gamma}^\top} ,
\end{equation}
together with the constraints \eqref{eq:le_min_problem_restrictions}. It can be shown~\cite{vidal_nonlinear_2016} that the solution to this problem is the matrix $\vb*{\Gamma}$ whose rows are the eigenvectors of $\vb{D}^{-1}_{\vb{W}}\vb{L}$ corresponding to the smallest eigenvalues. Since $\vb{1}_N$ is the eigenvector with the smallest eigenvalue of $\vb{D}^{-1}_{\vb{W}}\vb{L}$, that is, $0$, the $d$ eigenvectors must be taken, in the order given by their eigenvalues, starting from the second one, to fulfill the constraints.

\subsection{Diffusion Maps}\label{sec:diffusion_maps}

Another popular nonlinear dimensionality reduction technique is Diffusion Maps (DM;~\cite{coifman_diffusion_2006}). Like LE, DM belongs to the subset of kernel-based spectral methods that interpret the kernel matrix as the weight matrix of a graph connecting the data points. In particular, DM is characterized by defining a random walk on said graph. This, in turn, allows defining a distance between data points, called the diffusion distance, capable of capturing the local geometry\footnote{As in~\cite{coifman_diffusion_2006}, here the term geometry will refer to a set of rules that describe the relationships between data points. For example, ``being close to a point'' is one such rule.} of the data set. The embedding obtained with the so-called diffusion maps allows the Euclidean distances between the data points in the new representation to be related to the diffusion distance between them in the original representation. Thus, it is possible to define a new global representation of the data in a reduced dimension, preserving the most relevant characteristics of its local geometric structure.

In the case of DM, the starting points is a typically Gaussian kernel:
\begin{equation}\label{eq:gaussian_kernel}
	K_{i, j} = \mathcal{K}\pqty{\vb{x}_{i}, \vb{x}_{j}} = \exp\pqty{-\dfrac{\norm{\vb{x}_{i} - \vb{x}_{j}}^2}{2\sigma^2}}.
\end{equation}
In~\cite{coifman_diffusion_2006}, it is proposed to modify this kernel in the following way in order to regulate the influence that the data sampling density has on the algorithm:
\begin{equation}
	K^{(\alpha)}_{i, j} = \dfrac{K_{i, j}}{d_i^\alpha(\vb{K})d_j^\alpha(\vb{K})},
\end{equation}
where $\alpha\in\R$ and $d_i(\vb{K}) = \sum_{j=1}^N K_{i, j}$. 
Defining $\vb{D}_{\vb{K}} = \operatorname{diag}(d_1(\vb{K}),\dotsc, d_N(\vb{K}))$,
we can write
\begin{equation}
	\vb{K}^{(\alpha)} = \vb{D}_{\vb{K}}^{-\alpha} \vb{K} \vb{D}_{\vb{K}}^{-\alpha}.
\end{equation}
The parameter $\alpha$ regulates the influence of the sampling density on the algorithm: when $\alpha = 0$, the original kernel is recovered and the influence is maximum, while for $\alpha=1$ it is null, that is, the statistics and the underlying geometry of the data are decoupled. Once the kernel is set, the weight matrix of the graph is chosen as
\begin{equation}
	\vb{W} = \vb{K}^{(\alpha)}
\end{equation}
and a random walk in the graph can be defined by the following stochastic or Markov matrix:
\begin{equation}\label{eq:dm_prob}
	\vb{P} = \vb{D}_{\vb{W}}^{-1}\vb{W}.
\end{equation}
Specifically, the element $P_{i,j}$ defines the transition probability from node $i$ to node $j$, which does not coincide, in general, with $P_{j, i}$, the transition probability from node $j$ to node $i$. To know the transition probability from node $i$ to node $j$ in $t$ steps, we simply iterate $t$ times over the Markov chain, so that the transition probabilities in $t$ steps between nodes are given by the matrix $\vb{P}^t$. This Markov chain has a stationary distribution, $\vb*{\pi}$, that is, such that $\vb*{\pi}^\top \vb{P}=\vb*{\pi}^\top$, given by
\begin{equation}\label{eq:stationary_dist}
	\pi_i = \dfrac{d_i(\vb{W})}{\sum_j d_j(\vb{W})},
\end{equation}
which is unique if the graph is connected.
The justification for defining this random walk is based on the idea that the kernel, $\vb{K}$, captures relevant aspects of the local geometry of the data, while the Markov chain given by $\vb{P}$ defines, based on the values of $\vb{K}$, a random walk that allows, as it progresses, to propagate and accumulate this information about the local geometry, revealing relevant geometric structures in the data at different scales, and thus globally characterizing the graph. For example, from the point of view of a random walker, a cluster can be understood as a region in the data graph for which the probability of escaping is small.


The transition probabilities given by the different powers of $\vb{P}$ allow to define a family of distances in the graph, called diffusion distances, which reflect the geometry of the data at different scales or resolutions:
\begin{equation}\label{eq:d_dif_orig}
	d^{(t)}_{\text{dif}}\pqty{\vb{x}_{i},\vb{x}_{j}} = \Bqty{\sum_{k=1}^N \dfrac{1}{\pi_k}\bqty{\pqty{\vb{P}^t}_{i, k} - \pqty{\vb{P}^t}_{j, k}}^2}^{1/2}.
\end{equation}
The intuition behind this definition is that the diffusion distance at $t$ steps between two data points, $\vb{x}_{i}$ and $\vb{x}_{j}$, is small if it is approximately equally likely to reach most points in $t$ steps from $\vb{x}_{i}$ as from $\vb{x}_{j}$.
Thus, the notion of proximity that defines this distance reflects the connectivity between nodes in the graph, summarizing all the evidence that relates some data with others. For this reason, the diffusion distance seems especially suitable for detecting structures such as clusters. In general, as already explained, since it is a distance defined from a random walk in the graph, it is capable of capturing very well complex structures in the data at different scales, determined by the value of the parameter $t$. Because the diffusion distance is symmetric and satisfies the triangular inequality, it is a semimetric~\cite{coifman_diffusion_2006}. If, in addition, all the elements of $\vb{K}$ are greater than zero, as it is the case with a Gaussian kernel, then it is a metric. The advantages of diffusion distance over other possible metrics in the graph are its robustness, considering all possible paths of length $t$ between points, and its low computational cost compared to, for example, geodesic distance.

The most interesting aspect of the diffusion distance is that it can be expressed as the Euclidean distance in a embedding of the data defined from the right eigenvectors, $\Bqty{\vb*{\psi}_l}_l$, and eigenvalues, $\Bqty{\lambda_l}_l$, of $\vb{P}$.
To demonstrate this, consider the following matrix:
\begin{equation}\label{eq:matrix_A}
	\vb{A} = \vb{D}_{\vb{W}}^{-1/2} \vb{W} \vb{D}_{\vb{W}}^{-1/2} = \vb{D}_{\vb{W}}^{1/2} \vb{P} \vb{D}_{\vb{W}}^{-1/2} = \vb*{\Pi}^{1/2} \vb{P} \vb*{\Pi}^{-1/2},
\end{equation}
where $\vb*{\Pi} = \operatorname{diag}(\pi_1, \ldots, \pi_N) = \vb{D}_{\vb{W}}/\sum _{i=1}^N d_i(\vb{W})$. Being real and symmetric, there exists a real orthonormal basis, $\Bqty{\vb*{\phi}_l}_{l=1}^N$, of eigenvectors of $\vb{A}$ with real eigenvalues, $\Bqty{\lambda_l}_{l=1}^N$. In turn, since $\vb{P}$ is related to $\vb{A}$ by the change of basis given by $\vb*{\Pi}^{-1/2}$, the eigenvalues of $\vb{P}$ are the same as those of $\vb{A}$, while its left eigenvectors, $\Bqty{\vb*{\varphi}_l}_{l=1}^N$, and right eignevectors, $\Bqty{\vb*{\psi}_l}_{l=1}^N$, are given by
\begin{equation}
	\vb*{\varphi}_l^\top = \vb*{\phi}_l^\top \vb*{\Pi}^{1/2}\qq{and} \vb*{\psi}_l = \vb*{\Pi}^{-1/2}\vb*{\phi}_l.
\end{equation}
Hence, $\vb{P}^t$ can be expressed as $\sum_{l=1}^N\lambda_i^t\vb*{\psi}_l\vb*{\varphi}_l^\top$, that is,
\begin{equation}
	\pqty{\vb{P}^t}_{i, j} = \sum_{l=1}^N\lambda_l^t\pqty{\vb*{\psi}_l}_i\pqty{\vb*{\varphi}_l}_j.
\end{equation}
Substituting this into \eqref{eq:d_dif_orig} we can write
\begin{align}	      \nonumber\bqty{d_{\text{dif}}^{(t)}\pqty{\vb{x}_{i},\vb{x}_{j}}}^2 &= \sum_{k=1}^N \dfrac{1}{\pi_k}\bqty{\sum_{l=1}^N\lambda_l^t\pqty{\vb*{\psi}_l}_i\pqty{\vb*{\varphi}_l}_k - \lambda_l^t\pqty{\vb*{\psi}_l}_j\pqty{\vb*{\varphi}_l}_k}^2\\
	\nonumber&= \sum_{l=1}^N\lambda_l^{2t}\bqty{\pqty{\vb*{\psi}_l}_i - \pqty{\vb*{\psi}_l}_j}^2\sum_{k=1}^N \dfrac{\pqty{\vb*{\varphi}_l}_k^2}{\pi_k}\\
	\nonumber&= \sum_{l=1}^N\lambda_l^{2t}\bqty{\pqty{\vb*{\psi}_l}_i - \pqty{\vb*{\psi}_l}_j}^2\sum_{k=1}^N \dfrac{\bqty{(\vb*{\phi}_l)_k\sqrt{\pi_k}}^2}{\pi_k}\\
	&= \sum_{l = 2}^N \lambda_l^{2 t}\bqty{\pqty{\vb*{\psi}_l}_i-\pqty{\vb*{\psi}_l}_j}^2,\label{eq:d_dif_eig}
\end{align}
where it is taken into account that the eigenvectors $\Bqty{\vb*{\phi}_l}_{l=1}^N$ have norm equal to $1$. Note that the summation of the last line in \eqref{eq:d_dif_eig} starts at $l=2$. The reason is that, if the graph is connected, $\vb{P}$ has a single eigenvalue, $\lambda_1$, equal to $1$, whose eigenvector on the left is $\vb*{\varphi}_1 = \vb*{\pi}$ and on the right, $\vb*{\psi}_1 = \vb{1}_N$; while the rest of its eigenvalues are strictly less than $1$ in absolute value.


From \eqref{eq:d_dif_eig}, it is easy to see that the diffusion distance can be expressed as the Euclidean distance at a certain embedding:
\begin{equation}\label{eq:d_dif_emb}
	d_{\text{dif}}^{(t)}\pqty{\vb{x}_{i},\vb{x}_{j}} = \norm{\Psi^{(t)}\pqty{\vb{x}_{i}}-\Psi^{(t)}\pqty{\vb{x}_{j}}},
\end{equation}
where
\begin{equation}\label{eq:dm_emb_psi}
	\Psi^{(t)}\pqty{\vb{x}_{k}} =\pmqty{
		\lambda_2^t (\vb*{\psi}_1)_k \\
		\vdots \\
		\lambda_N^t (\vb*{\psi}_N)_k
	}
\end{equation}
is the embedding of the $k$-th point.


It is important to note that, to arrive at \eqref{eq:d_dif_eig}, it is necessary to assume that the vectors $\Bqty{\vb*{\phi}_l}_{l=1}^N$ are normalized, so the norm of $\vb*{\psi}_l$ in \eqref{eq:d_dif_eig} and \eqref{eq:dm_emb_psi} is
\begin{equation}
	\norm{\vb*{\psi}_l}^2 = (\vb*{\Pi}^{-1/2}\vb*{\phi}_l)^\top (\vb*{\Pi}^{-1/2}\vb*{\phi}_l) = \vb*{\phi}_l^\top\vb*{\Pi}^{-1}\vb*{\phi}_l.
\end{equation}
Therefore, the choice of the norm of the vectors $\Bqty{\vb*{\psi}_l}_{l=1}^N$ is not arbitrary. In particular, its norm is not, in general, equal to $1$. Consequently, the embedding cannot be calculated directly from the eigenvectors of $\vb{P}$, but it is necessary to do so from the eigenvectors of $\vb{A}$, whose norm is known and equal to $1$:
\begin{equation}\label{eq:dm_emb_phi}
	\Psi^{(t)}\pqty{\vb{x}_{k}} =\dfrac{1}{\sqrt{\pi_k}}\pmqty{
		\lambda_2^t (\vb*{\phi}_2)_k\\
		\vdots \\
		\lambda_N^t (\vb*{\phi}_N)_k
	}.
\end{equation}
The $\Bqty{\Psi^{(t)}}_{t\in\N}$ transformations, which provide a family of embeddings, are called diffusion maps, and each of the components of $\Psi^{(t)}$ is a diffusion coordinate. This family of representations captures the geometry of the data at different scales, depending on the value of $t$. Finally, since all eigenvalues in the summation of \eqref{eq:d_dif_eig} are less than $1$ in absolute value, by taking only the largest magnitude eigenvalues and their corresponding eigenvectors to define \eqref{eq:dm_emb_phi}, and restricting the summation of \eqref{eq:d_dif_eig} to them, one can obtain an approximation of the diffusion distance by the Euclidean distance in a lower-dimensional embedding.
\subsection{Nystr\"om method for Diffusion Maps}\label{sec:dm_nystrom}

Kernel-based spectral DR methods, including DM, present two important problems associated with the spectral decomposition of their similarity matrix. First, its complexity and memory cost scale, respectively, as $\order{N^3}$ and $\order{N^2}$, that is, with the cube and square of the number of data points, $N$. The second major problem is that these methods cannot be directly applied to points not included in the initial data sample. Hence, embedding new data points requires, in principle, re-running the entire algorithm with the extended sample. To try to solve or alleviate these problems, different strategies have emerged. One of the most popular is the so-called Nystr\"om method. This method was initially introduced as a way to obtain numerical solutions to integral equations~\cite{nystrom_uber_1930}. It was later presented as a way to speed up kernel-based algorithms~\cite{williams_using_2000} and, later, as a tool to extend embeddings of spectral methods to points outside the initial sample~\cite{bengio_out--sample_2003}. According to~\cite{williams_using_2000}, Nystr\"om method reduces the memory cost of computing $M<N$ eigenvectors and eigenvalues of $\vb{K}$ from $\order{N^2}$ to $\order{NM}$ and the computational complexity from $\order{N^3}$ to $\order{NM^2}$. The Nystr\"om method for approximating eigenvectors and eigenvalues of a kernel matrix is presented below, as well as how to apply it to the case of DM according to~\cite{fernandez_diffusion_2015}.

Consider two i.i.d. samples, $\Bqty{\vb{x}_{k}}_{k=1}^{M}$ and $\Bqty{\vb{x}_{k}}_{k=M+1}^{N+M}$, from the same distribution and a certain kernel function, $\mathcal{K}$. Let $\lambda^{\aqty{M}}_k$ and $\vb{u}^{\aqty{M}}_k$, respectively, be the $k$-th eigenvalue and its corresponding eigenvector of the kernel matrix, $\vb{K}^{\aqty{M}}$, of the first sample. On the other hand, let $\lambda^{\aqty{N+M}}_k$ and $\vb{u}^{\aqty{N+M}}_k$ denote, respectively, the $k$-th eigenvalue and its corresponding eigenvector of the kernel matrix, $\vb{K}^{\aqty{N+M}}$, of the union of the two samples, $\Bqty{\vb{x}_{k}}_{k=1}^{N+M}$. Then, when $N/M\to 0$, the Nystr\"om method allows to approximate the first $M$ eigenvectors and eigenvalues of the kernel matrix of the union of the two samples from those of the kernel matrix of the first sample as follows~\cite{bengio_out--sample_2003} (see Appendix~\ref{sec:general_nystrom}):
\begin{gather}
	\pqty{\vb{u}^{\aqty{N+M}}_i}_j \approx \dfrac{1}{\lambda_i^{\aqty{M}}}\sum_{k=1}^M\mathcal{K}\pqty{\vb{x}_{j}, \vb{x}_{k}}\pqty{\vb{u}^{\aqty{M}}_i}_k,\label{eq:nystrom_inf_eigenvectors}\\
	\lambda_i^{\aqty{N+M}}\approx\lambda_i^{\aqty{M}},\label{eq:nystrom_inf_eigenvalues}
\end{gather}
for $i = 1,\ldots, M$ and $j = 1, \ldots, N+M$.

In the case of DM, the approximations given by \eqref{eq:nystrom_inf_eigenvectors} and \eqref{eq:nystrom_inf_eigenvalues} can be applied to compute the eigenvectors and eigenvalues of $\vb{A}$, since, like a kernel matrix, it is real and symmetric. The idea is to extend the eigenvectors and eigenvalues of $\vb{A}$ obtained from an initial set, $\Bqty{\vb{x}_{k}}_{k=1}^{M}$, to a different sample, $\Bqty{\vb{x}_{k}}_{k=M+1}^{N+M}$, with \eqref{eq:nystrom_inf_eigenvectors} and \eqref{eq:nystrom_inf_eigenvalues}. However, there is the problem that, unlike the kernel function, $\mathcal{K}$, and the corresponding kernel matrix, $\vb{K}$, there is no function $\mathcal{A}$ such that $A_{i, j} = \mathcal{A}(\vb{x}_i, \vb{x}_j)$ for points both inside and outside the initial sample. On the other hand, recalculating the complete matrix $\vb{A}$ for the extended sample $\Bqty{\vb{x}_{k}}_{k=1}^{N+M}$ each time one wants to obtain the DM embedding on new points is not efficient either in terms of memory or computational complexity. For these reasons, it is necessary to define an approximate function, $\Tilde{\mathcal{A}}$, from the initial sample, $\Bqty{\vb{x}_{k}}_{k=1}^{M}$.

In analogy to the application of Nyström method done in~\cite{bengio_out--sample_2003} for LE and Spectral Clustering, one can define a function that approximates the value of the matrix $\vb{A}$ for new data points, $\vb{a}$ and $\vb{b}$, as
\begin{equation}\label{eq:tilde_A}
	\Tilde{\mathcal{A}}(\vb{a}, \vb{b}) = \dfrac{1}{M}\dfrac{\mathcal{K}^{(\alpha)}(\vb{a},\vb{b})}{\sqrt{\E_{\vb{x}}\bqty{\mathcal{K}^{(\alpha)}(\vb{a},\vb{x})}\E_{\vb{x}}\bqty{\mathcal{K}^{(\alpha)}(\vb{b},\vb{x})}}},
\end{equation}
where the expected values are estimated using the empirical mean over the initial sample. Similarly, when $\alpha\neq 0$, in order to calculate \eqref{eq:tilde_A}, it is also necessary to define
\begin{equation}
	\tilde{\mathcal{K}}^{(\alpha)}(\vb{a},\vb{b}) = \dfrac{1}{M^{2\alpha}}\dfrac{\mathcal{K}(\vb{a},\vb{b})}{\E_{\vb{x}}\bqty{\mathcal{K}(\vb{a},\vb{x})}^\alpha\E_{\vb{x}}\bqty{\mathcal{K}(\vb{b},\vb{x})}^\alpha}
\end{equation}
to approximate $\mathcal{K}^{(\alpha)}$, where $\mathcal{K}$ is the original kernel function. If $\alpha=0$, this is not necessary, since then $\mathcal{K}^{(\alpha)} = \mathcal{K}$, whose analytical expression is known. Note that $\tilde{\mathcal{A}}(\vb{x}_i, \vb{x}_j) = A_{i, j}^{\aqty{M}}$ for $i, j = 1, \ldots, M$, since
\begin{gather}
    d_i\pqty{\vb{K}^{\aqty{M}}} = M\E_{\vb{x}}\bqty{\mathcal{K}\pqty{\vb{x}_i, \vb{x}}},\\
    d_i\pqty{\vb{W}^{\aqty{M}}} = M\E_{\vb{x}}\bqty{\tilde{\mathcal{K}}^{(\alpha)}\pqty{\vb{x}_i, \vb{x}}}.
\end{gather}
when the expected values are estimated using the empirical mean over the initial sample. With these approximations, to extend the eigenvectors of $\vb{A}$, it would only be necessary to keep $\Bqty{d_i\pqty{\vb{K}^{\aqty{M}}}}_{i=1}^M$ and $\Bqty{d_i\pqty{\vb{W}^{\aqty{M}}}}_{i=1}^M$, and compute $\mathcal{K}(\vb{x}_j, \vb{x}_i)$ for $i = 1, \ldots, M$ and $j = M+1, \ldots, N+M$.

Finally, once the extensions of the eigenvectors of $\vb{A}$ have been calculated, in order to calculate the embedding \eqref{eq:dm_emb_phi} from them, it is necessary to know the value of the stationary distribution, $\vb*{\pi}$, not only at the points of the reduced sample, but at all the points of the complete sample. To achieve this, we can extend the eigenvector $\vb*{\phi}_1$ with eigenvalue $\lambda_1 = 1$, since this eigenvector coincides with the square root of the stationary distribution over the corresponding points: $\vb*{\phi}_1 = \sqrt{\vb*{\pi}}\equiv (\sqrt{\pi_1},\ldots, \sqrt{\pi_N})^\top$.
\section{Related works}\label{sec:related_work}

As already mentioned, a more recent alternative to the Nystr\"om method consists in turning to deep learning. The motivation is threefold. First, neural networks are parametric methods, meaning that they provide an explicit function that can be evaluated both within and outside the initial data set. Second, neural networks have a proven ability to generate complex and efficient representations of large volumes of data of all kinds. Finally, neural network training and inference can now be very fast thanks to the use of specialized hardware such as GPUs (Graphics Processing Units) and TPUs (Tensor Processing Units).

The idea of using neural networks to map each data point to the corresponding embedding of some spectral manifold learning method was first studied in~\cite{gong_neural_2006}. Although there are other works that study ways to improve the efficiency of spectral methods through deep learning, such as~\cite{tian_learning_2014, han_mini-batch_2017}, only the approach presented in~\cite{gong_neural_2006} allows training a neural network that receives as input an instance of the original feature space and associates its corresponding embedding. This aspect is key, since it is what allows direct application to data outside the training sample. As explained in~\cite{gong_neural_2006}, given a data set, $\mathcal{D} = \Bqty{\vb{x}_i}_{i=1}^N$, spectral manifold learning methods typically define an embedding, $\Bqty{\vb*{\gamma}_i}_{i=1}^N$, that solves some minimization problem of the form
\begin{equation}
	\argmin_{\vb*{\gamma}_1,\ldots, \vb*{\gamma}_N\in\R^d}\mathcal{Q}\pqty{\Bqty{\vb{x}_i}_{i=1}^N, \Bqty{\vb*{\gamma}_i}_{i=1}^N}.
\end{equation}
According to~\cite{gong_neural_2006}, there are two possible ways to solve the same problem using deep learning:
\begin{enumerate}
	\item Compute the embedding vectors, $\Bqty{\vb*{\gamma}_i}_{i=1}^N$, for the training data set, $\mathcal{D}$, using the original spectral method and train a neural network, $f_{\vb*{\theta}}$ with parameters $\vb*{\theta}$, to solve the regression problem associated with the labeled data set $\Bqty{\pqty{\vb{x}_i, \vb*{\gamma}_i}}_{i=1}^N$. One possible way to do this is by using the following cost function:
    \begin{equation}\label{eq:metodo_neuronal_1}
		\mathcal{J}(\vb*{\theta}) = \dfrac{1}{N}\sum_{i=1}^N\norm{f_{\vb*{\theta}}\pqty{\vb{x}_i} - \vb*{\gamma}_i}^2.
	\end{equation}
	\item Train a neural network, $f_{\vb*{\theta}}$, taking as cost function the result of replacing the embedding vectors with the network outputs in the objective function:
    \begin{equation}\label{eq:metodo_neuronal_2}
		\mathcal{J}(\vb*{\theta}) = \mathcal{Q}\pqty{\Bqty{\vb{x}_i}_{i=1}^N, \Bqty{f_{\vb*{\theta}}\pqty{\vb{x}_i}}_{i=1}^N}.
	\end{equation}
\end{enumerate}

As shown in~\cite{jansen_scalable_2017} for the first option, neural networks are able to provide better embedding vectors in less time than the Nystr\"om method for points outside the training sample. 
In fact, the runtime gain can reach several orders of magnitude for large data sets. Furthermore, neural networks, once trained, do not need to continue storing the training data or the results of the spectral decomposition, as happens in Nystr\"om, so the memory savings can also be significant.

However, the first approach, although effective, does not avoid having to perform the expensive spectral decomposition of a kernel matrix in order to carry out the training. The second approach, instead, manages to avoid this; 
indeed, it has been successfully applied to the case of Isomap~\cite{tenenbaum_global_2000} in~\cite{pai_deep_2022} and also to the case of Spectral Clustering~\cite{jianbo_shi_normalized_2000} in~\cite{shaham_spectralnet_2018}. Furthermore, this second method allows to employ the versatility and power of deep learning to easily extend classical linear methods such as PCA~\cite{pearson_liii_1901}, with Autoencoders~\cite{kramer_nonlinear_1991}, or FLDA (Fisher Linear Discriminant Analysis)~\cite{fisher_use_1936}, with DFDA (Deep Fisher Discriminant Analysis)~\cite{diaz-vico_deep_2020}, to a nonlinear format.
\section{Deep Diffusion Maps}\label{sec:ddm}

The proposal of this work consists in applying the approach given by \eqref{eq:metodo_neuronal_2} to the case of DM. To do so, it is necessary to relate the embedding of DM with the solution of some minimization problem. One way to do this is by realizing that the embedding of DM is a scaled version of the embedding of LE when in both of them the weights of the data graph are defined by the same kernel matrix.

The reason is that
\begin{equation}
	\vb{D}^{-1}_{\vb{W}}\vb{L} = \I_N - \vb{D}^{-1}_{\vb{W}}\vb{W} = \I_N - \vb{P},
\end{equation}
so the right eigenvectors of $\vb{D}^{-1}_{\vb{W}}\vb{L}$, which define the LE embedding, are the same as those of $\vb{P}$, which define the DM solution. Furthermore, the relation between the eigenvalues of $\vb{D}^{-1}_{\vb{W}}\vb{L}$, $\{\tilde{\lambda}_i\}_{i=1}^N$, and the eigenvalues of $\vb{P}$ corresponding to the same eigenvectors, $\{\lambda_i\}_{i=1}^N$, is
\begin{equation}
	\tilde{\lambda}_i = 1 - \lambda_i, \qquad i = 1,\ldots, N,
\end{equation}
so the eigenvectors associated with the smallest eigenvalues in LE correspond to the eigenvectors with the largest eigenvalues in DM.

However, the DM embedding, although it satisfies the second constraint of the LE minimization problem \eqref{eq:le_min_problem_restrictions}, it does not verify the first one due to its normalization. According to \eqref{eq:dm_emb_phi}, the matrix whose columns are the embedding vectors of DM is $\vb*{\Gamma} = \vb*{\Lambda}^{t}(\vb*{\Pi}^{-1/2}\vb*{\Phi})^\top$, where $\vb*{\Phi}$ is the matrix whose columns are the eigenvectors, $\Bqty{\vb*{\phi}_l}_{l=2}^N$, of $\vb{A}$ and $\vb*{\Lambda} = \operatorname{diag}\pqty{\lambda_2, \ldots, \lambda_N}$, with $\lambda_2\geq\ldots\geq\lambda_N$ the corresponding eigenvalues. Therefore,
\begin{equation}
	\vb*{\Gamma} \vb{D}_{\vb{W}} \vb*{\Gamma}^\top = \pqty{\vb*{\Lambda}^t\vb*{\Phi}^\top\vb*{\Pi}^{-1/2}}\vb{D}_{\vb{W}}\pqty{\vb*{\Pi}^{-1/2}\vb*{\Phi}\vb*{\Lambda}^t}= \vb*{\Lambda}^{2t}\sum_{i=1}^N d_i(\vb{W}).\label{eq:dm_normalization_restriction}
\end{equation}

In conclusion, DM solves the same minimization problem as LE but with a different normalization constraint. Therefore, DM also defines an embedding that tries to minimize the Euclidean distance between the most strongly connected points in the graph, that is, the most similar, but with the advantage that, at the same time, it approximates the diffusion distance using the Euclidean distance.

However, this is a constrained minimization problem, which is more difficult to solve using neural networks. Furthermore, the constraint \eqref{eq:dm_normalization_restriction} necessarily involves performing a spectral decomposition of $\vb{A}$ in order to define a cost function, something that the approach given by \eqref{eq:metodo_neuronal_2} attempted to avoid. Given these problems, a particularly convenient way to define the embedding of DM is the one proposed for the first time in this work through the following theorem.

\begin{thm}
The matrix whose columns are the embedding vectors, $\Bqty{\vb*{\gamma}_i}_{i=1}^N$, of dimension $d$, given by Diffusion Maps is $\vb{R}\vb*{\Gamma}^\star$, with $\vb{R}\in O(d)$ a certain orthogonal matrix\footnote{That is, a rotation and/or reflection.} and
\begin{equation}\label{eq:dm_min_problem}
		\vb*{\Gamma}^\star = \argmin_{\vb*{\Gamma}\in\R^{d\times N}}\norm{\vb*{\Pi}^{1/2}\vb*{\Gamma}^\top \vb*{\Gamma}\vb*{\Pi}^{1/2} - \pqty{\vb{A}^{2t}-\sqrt{\vb*{\pi}}\sqrt{\vb*{\pi}}^\top}}_F^2,
	\end{equation} 
where $\vb{A}$ is given by \eqref{eq:matrix_A}, $\vb*{\Pi} = \operatorname{diag}(\pi_1, \ldots, \pi_N)$ and $\vb*{\pi}$ is the stationary distribution \eqref{eq:stationary_dist}.
\end{thm}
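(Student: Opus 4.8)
The plan is to rewrite the objective in the orthonormal eigenbasis $\{\vb*{\phi}_l\}_{l=1}^N$ of $\vb{A}$, recognize the problem as a best rank-$d$ approximation of a positive semidefinite matrix, and apply the Eckart--Young--Mirsky theorem. First I would expand the target matrix spectrally. Writing $\vb{A} = \sum_{l=1}^N \lambda_l \vb*{\phi}_l \vb*{\phi}_l^\top$ gives $\vb{A}^{2t} = \sum_{l=1}^N \lambda_l^{2t}\vb*{\phi}_l\vb*{\phi}_l^\top$, and since $\vb*{\phi}_1 = \sqrt{\vb*{\pi}}$ with $\lambda_1 = 1$, the rank-one correction removes precisely the first term:
\begin{equation*}
	\vb{B} := \vb{A}^{2t} - \sqrt{\vb*{\pi}}\sqrt{\vb*{\pi}}^\top = \sum_{l=2}^N \lambda_l^{2t}\,\vb*{\phi}_l\vb*{\phi}_l^\top .
\end{equation*}
Because every coefficient $\lambda_l^{2t}$ is nonnegative, $\vb{B}$ is symmetric positive semidefinite, with eigenvalues $\{\lambda_l^{2t}\}_{l\ge 2}$ together with a zero eigenvalue along $\vb*{\phi}_1$. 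This is the key point the rank-one subtraction is designed to achieve: it annihilates the otherwise dominant eigenvalue $\lambda_1^{2t}=1$ so that the leading eigenspaces of $\vb{B}$ are exactly the diffusion coordinates.

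Next I would reparametrize the variable. For any $\vb*{\Gamma}\in\R^{d\times N}$ the matrix $\vb*{\Pi}^{1/2}\vb*{\Gamma}^\top\vb*{\Gamma}\vb*{\Pi}^{1/2}$ is symmetric positive semidefinite of rank at most $d$; conversely, since $\vb*{\Pi}^{1/2}$ is invertible and every symmetric positive semidefinite matrix of rank $\le d$ factors as $\vb{G}^\top\vb{G}$ with $\vb{G}\in\R^{d\times N}$, this expression ranges over exactly the set of all symmetric positive semidefinite matrices of rank at most $d$. Hence \eqref{eq:dm_min_problem} is equivalent to minimizing $\norm{\vb{M}-\vb{B}}_F^2$ over symmetric positive semidefinite $\vb{M}$ of rank $\le d$.

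Then I would invoke Eckart--Young--Mirsky: over all (not necessarily symmetric) matrices of rank $\le d$, the Frobenius-optimal approximation of $\vb{B}$ is the truncation of its eigendecomposition to the $d$ largest eigenvalues, $\vb{M}^\star = \sum_{l\in S}\lambda_l^{2t}\vb*{\phi}_l\vb*{\phi}_l^\top$, where $S$ collects the indices of the $d$ largest values $\lambda_l^{2t}$---equivalently, by monotonicity of $x\mapsto x^{2t}$ on $[0,\infty)$, the $d$ eigenvalues $\lambda_l$ of largest magnitude among $l\ge 2$, which are exactly those retained by the Diffusion Maps embedding. Since $\vb{B}$ is positive semidefinite this optimum is itself positive semidefinite, so it lies in the feasible set of the reformulated problem and therefore also solves it. A direct computation with $\vb*{\Gamma}_{\mathrm{DM}} = \vb*{\Lambda}_d^{t}\vb*{\Phi}_d^\top\vb*{\Pi}^{-1/2}$ (the dimension-$d$ Diffusion Maps matrix, with $\vb*{\Phi}_d$ and $\vb*{\Lambda}_d$ the selected eigenvectors and eigenvalues) gives $\vb*{\Pi}^{1/2}\vb*{\Gamma}_{\mathrm{DM}}^\top\vb*{\Gamma}_{\mathrm{DM}}\vb*{\Pi}^{1/2} = \vb*{\Phi}_d\vb*{\Lambda}_d^{2t}\vb*{\Phi}_d^\top = \vb{M}^\star$, so $\vb*{\Gamma}_{\mathrm{DM}}$ is a minimizer.

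Finally I would recover the orthogonal factor. From $\vb*{\Pi}^{1/2}(\vb*{\Gamma}^\star)^\top\vb*{\Gamma}^\star\vb*{\Pi}^{1/2} = \vb*{\Pi}^{1/2}\vb*{\Gamma}_{\mathrm{DM}}^\top\vb*{\Gamma}_{\mathrm{DM}}\vb*{\Pi}^{1/2}$ and invertibility of $\vb*{\Pi}^{1/2}$, I get $(\vb*{\Gamma}^\star)^\top\vb*{\Gamma}^\star = \vb*{\Gamma}_{\mathrm{DM}}^\top\vb*{\Gamma}_{\mathrm{DM}}$; that is, the two embeddings share the same column-Gram matrix. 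The standard fact that two matrices in $\R^{d\times N}$ with identical column-Gram matrices differ by a left orthogonal factor---proved by aligning their singular value decompositions---then yields $\vb{R}\in O(d)$ with $\vb*{\Gamma}_{\mathrm{DM}} = \vb{R}\vb*{\Gamma}^\star$. The main obstacle I anticipate is not any single computation but the careful matching at this juncture: ensuring the magnitude ordering demanded by Eckart--Young--Mirsky coincides with the Diffusion Maps selection (including the degenerate case of a tie in $\lambda_l^{2t}$ at the cutoff, where $\vb*{\Gamma}^\star$ must be read as ``a minimizer''), and carrying out the SVD alignment cleanly when $\vb{B}$ has repeated nonzero eigenvalues.
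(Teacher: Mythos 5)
Your proposal is correct and follows essentially the same route as the paper's proof: expand $\vb{A}^{2t}-\sqrt{\vb*{\pi}}\sqrt{\vb*{\pi}}^\top$ spectrally so that the subtraction kills the top eigendirection, apply Eckart--Young--Mirsky to identify the optimal Gram matrix $\vb*{\Phi}_d\vb*{\Lambda}_d^{2t}\vb*{\Phi}_d^\top$, and recover the Diffusion Maps matrix up to a left orthogonal factor from equality of Gram matrices. Your treatment is in fact slightly more careful than the paper's on three side points the paper leaves implicit ---that the feasible set is exactly the PSD matrices of rank at most $d$ (not just contained in it), that positive semidefiniteness of the target makes the unconstrained EYM optimum feasible, and the degenerate case of a tie $\lambda_d^{2t}=\lambda_{d+1}^{2t}$ at the cutoff--- but these are refinements of the same argument, not a different one.
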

\begin{proof}
If $\Bqty{\vb*{\phi}_l}_{l=1}^N$ are the eigenvectors of $\vb{A}$ and $\Bqty{\lambda_l}_{l=1}^N$, with $1 = \lambda_1\geq\cdots\geq\lambda_N$, are their corresponding eigenvalues, then $\sqrt{\vb*{\pi}} = \vb*{\phi}_1$ and
\begin{equation}
		\vb{A}^{2t}-\sqrt{\vb*{\pi}}\sqrt{\vb*{\pi}}^\top = \pqty{\sum_{l=1}^N\lambda_l^{2t} \vb*{\phi}_l\vb*{\phi}_l^\top} - \vb*{\phi}_1\vb*{\phi}_1^\top = \sum_{l=2}^N\lambda_l^{2t} \vb*{\phi}_l\vb*{\phi}_l^\top = \vb*{\Phi}\vb*{\Lambda}^{2t}\vb*{\Phi}^\top,
	\end{equation}
where $\vb*{\Phi}$ is the matrix whose columns are the eigenvectors $\Bqty{\vb*{\phi}_l}_{l=2}^N$ and $\vb*{\Lambda} = \operatorname{diag}(\lambda_2,\ldots, \lambda_N)$. Therefore, $\vb*{\Phi}$ and $\vb*{\Lambda}^{2t}$ are, respectively, the matrices of singular vectors and values of $\vb{A}^{2t}-\sqrt{\vb*{\pi}}\sqrt{\vb*{\pi}}^\top$. Hence, if we consider the partition $\vb*{\Phi} = \pmqty{\vb*{\Phi}_1 & \vb*{\Phi}_2}$ and $\vb*{\Lambda} = \operatorname{diag}(\vb*{\Lambda}_1, \vb*{\Lambda}_2)$, where $\vb*{\Lambda}_1 = \operatorname{diag}\pqty{\lambda_2, \ldots, \lambda_d}$ and $\vb*{\Phi}_1$ is the matrix whose columns are the corresponding eigenvectors, then by the Eckart–Young–Mirsky theorem\footnote{Informally, the theorem asserts that the matrix $\vb{C}$, with $\rank\pqty{\vb{C}}\leq d$, which best approximates, according to the Frobenius norm, another certain matrix $\vb{B}$ is the one obtained by truncating the singular value decomposition of $\vb{B}$ to the first $d$ singular values. In this case, the problem constraint would be $\rank\pqty{\vb*{\Pi}^{1/2}\vb*{\Gamma}^\top \vb*{\Gamma}\vb*{\Pi}^{1/2}}\leq d$, but it is automatically fulfilled, since $\rank\pqty{\vb{M}_1\vb{M}_2}\leq\min\Bqty{\rank\pqty{\vb{M}_1}, \rank\pqty{\vb{M}_2}}$ and $\rank\pqty{\vb*{\Gamma}}\leq d$, since $\vb*{\Gamma}$ has dimensions $d\times N$.}~\cite{eckart_approximation_1936, mirsky_symmetric_1960}, it follows that
\begin{equation}
		\vb*{\Pi}^{1/2}\pqty{\vb*{\Gamma}^{\star}}^\top \vb*{\Gamma}^\star\vb*{\Pi}^{1/2} = \vb*{\Phi}_1\vb*{\Lambda}_1^{2t}\vb*{\Phi}_1^\top.
\end{equation}
Consequently,
\begin{equation}
		\vb*{\Gamma}^\star\vb*{\Pi}^{1/2} = \tilde{\vb{R}}\vb*{\Lambda}^t_1\vb*{\Phi}^\top_1
\end{equation}
for any orthogonal matrix $\tilde{\vb{R}}\in O(d)$. From the above result, we finally obtain
\begin{equation}
		\vb*{\Gamma}^\star = \tilde{\vb{R}}\vb*{\Lambda}^t_1\vb*{\Phi}^\top_1\vb*{\Pi}^{-1/2}.
\end{equation}
On the other hand, the embedding \eqref{eq:dm_emb_phi} of DM of dimension $d$ can be expressed in matrix form as $\vb*{\Lambda}_1^t\pqty{\vb*{\Pi}^{-1/2}\vb*{\Phi}_1}^\top = \vb*{\Lambda}_1^t\vb*{\Phi}_1^\top\vb*{\Pi}^{-1/2}$. Hence, $\vb{R}\vb*{\Gamma}^\star$, with $\vb{R}=\tilde{\vb{R}}^\top$, is the matrix whose columns are the embedding vectors of DM.
\end{proof}

The problem \eqref{eq:dm_min_problem} can also be expressed as
\begin{equation}
	\argmin_{\vb*{\gamma}_1,\ldots,\vb*{\gamma}_N\in\R^d}\sum_{i=1}^N\sum_{j=1}^N\Bqty{\sqrt{\pi_i\pi_j}\vb*{\gamma}_i^\top\vb*{\gamma}_j - \bqty{\pqty{\vb{A}^{2t}}_{i,j}-\sqrt{\pi_i\pi_j}}}^2.
\end{equation}
Therefore, a neural network, $f_{\vb*{\theta}}:\R^D\to\R^d$, can be trained to obtain the embedding vector, $\vb*{\gamma}$, of DM for any instance of the original feature space, $\vb{x}$, received as input using the following cost function:
\begin{equation}\label{eq:cost_ddm}
	\mathcal{J}(\vb*{\theta}) = \sum_{i=1}^N\sum_{j=1}^N\Bqty{\sqrt{\pi_i\pi_j}f_{\vb*{\theta}}\pqty{\vb{x}_i}^\top f_{\vb*{\theta}}\pqty{\vb{x}_j} - \bqty{\pqty{\vb{A}^{2t}}_{i,j}-\sqrt{\pi_i\pi_j}}}^2.
\end{equation}
As a result, the neural network will be a parametric function that approximates the diffusion map of $t$ steps for points inside and outside the training sample:
\begin{equation}
	f_{\vb*{\theta}}(\vb{x})\approx\Psi^{(t)}(\vb{x}).
\end{equation}
To refer to this method that has just been presented to obtain the embedding of DM through deep learning, the name Deep Diffusion Maps (DDM) will be used. To the best of the authors' knowledge, this work is the first to use deep learning to obtain the embedding vectors of DM directly from the corresponding vectors in the original space and without the need to perform any spectral decomposition, giving for this purpose a new formulation of DM as a solution to a certain unconstrained minimization problem.

Just as DM can be applied to data vectors, images, or functional data~\cite{barroso_functional_2024}, this approach also allows for this. In the case of data vectors (tabular data), one can simply use dense or feedforward neural networks (FFNNs). With images, one can apply convolutional neural networks (CNNs) or, if the images are treated as pixel vectors, also FFNNs. For functional or sequential data, one can use both one-dimensional CNNs and recurrent neural networks (RNNs) and their variants. Functional data can also be expressed, approximately, as vectors in a certain base of functions, which can be worked with as in the case of tabular data. A key aspect in all cases is the calculation of the Euclidean distance between data points in order to calculate the Gaussian kernel. When it comes to images, this is done using the Frobenius norm, which is the same as using the usual Euclidean norm by treating images as vectors of pixels. For functional data, the generalization of the Euclidean distance is given by the $L_2$ functional norm. When the functional data are vectors of equally spaced values of the corresponding functions, then the $L_2$ norm reduces to the Euclidean norm.
\section{Experiments}\label{sec:experiments}

In this section, the results obtained by applying Deep Diffusion Maps (DDM; Section~\ref{sec:ddm}) on different datasets are compared with those offered by the Nystr\"om method (Section~\ref{sec:dm_nystrom}) and the original Diffusion Maps algorithm (DM; Section~\ref{sec:diffusion_maps}). All the code and the results of the experiments can be found in the repository of this work on GitHub\footnote{\url{https://github.com/sgh14/deep-diffusion-maps}}.

\subsection{Methodology}\label{subsec:methods}

The experiments will consist in taking different data sets and dividing each of them into two subsets, $\mathcal{D}_a$ and $\mathcal{D}_b$. First, the original DM algorithm will be applied to the entire set, that is, to $\mathcal{D}\equiv \mathcal{D}_a\cup \mathcal{D}_b$. Second, the subset $\mathcal{D}_a$ will be used as the training set for DDM and, once the model is trained, the embedding vectors for $\mathcal{D}_b$ will be calculated. On the other hand, the DM embedding will be calculated for the subset $\mathcal{D}_b$ from the sample $\mathcal{D}_a$ using the Nystr\"om method. Finally, the embeddings obtained for $\mathcal{D}_a$ and $\mathcal{D}_b$ using DDM and the Nystr\"om method will be compared with respect to the original DM algorithm.
The data sets used, the neural network architectures employed, the choice of hyperparameters and the metrics selected to evaluate the results are detailed below.

\subsubsection{Data sets}

In order to compare the proposed method on data of different types and with underlying manifolds with different degrees of definition and complexity, five different data sets have been considered: Swiss Roll, S Curve, Helix, MNIST and Phoneme.

First, Swiss Roll\footnote{\url{https://scikit-learn.org/stable/modules/generated/sklearn.datasets.make_swiss_roll.html}} is a synthetic dataset, consisting of a random sample of points drawn from a rolled two-dimensional manifold, defined by
    $\vb{x} = \pqty{t\cos(t), h, t\sin(t)}^\top$,
with $t\in[3\pi/2, 9\pi/2)$ and $ h\in[0, 21)$. For this work, a sample of \num{2000} points will be used, generated from uniform distribution samples of $t$ and $h$. Of these \num{2000} points, half will form the subset $\mathcal{D}_a$ and the other half, the subset $\mathcal{D}_b$.

Second, S Curve\footnote{\url{https://scikit-learn.org/stable/modules/generated/sklearn.datasets.make_s_curve.html}} is another synthetic dataset, again a random sample of points drawn from a two-dimensional manifold, this time S-shaped and defined by
    $\vb{x} = \pqty{\sin(t), h, \operatorname{sign}(t)[\cos(t)-1]}^\top$,
with $t\in[-3\pi/2, 3\pi/2)$ and $h\in[0, 2)$. As before, a sample of \num{2000} points will be used, generated from uniform distribution samples of $t$ and $h$. Of these \num{2000} points, half will form the subset $\mathcal{D}_a$ and the other half, $\mathcal{D}_b$.

Thirdly, Helix, as in the two previous cases, is an artificial data set, in this case formed by a sample of points extracted from a one-dimensional manifold defined by
    $\vb{x} = \pqty{\cos(\theta), \sin(2\theta), \cos(3\theta)}^\top$,
with $\theta\in[0, 2\pi)$. A sample of \num{2000} points will be used, generated from a uniform distribution sample of $\theta$. Half of these \num{2000} points will form the set $\mathcal{D}_a$ and the other half, the set $\mathcal{D}_b$.

On the other hand, MNIST\footnote{\url{https://www.tensorflow.org/datasets/catalog/mnist?hl=es}} is a set of \num{70000} images of dimensions $28\times 28$ in gray scale of the handwritten digits 0 to 9~\cite{le_cun_handwritten_1989}. In order to facilitate image processing, each pixel is divided by 255, to ensure that its value is in the range $[0, 1]$. For computational cost reasons, the set is reduced to a random sample of \num{5000} instances of the digits 0 to 5. Of those \num{5000} instances, \num{4000} will be used in the subset $\mathcal{D}_a$ and the remaining \num{1000} in $\mathcal{D}_b$. Unlike the previous sets, this is a real data set.

Finally, Phoneme\footnote{\url{https://fda.readthedocs.io/en/stable/modules/autosummary/skfda.datasets.fetch_phoneme.html}} is a set of \num{4509} periodograms of length 256 associated with 5 different phonemes of the English language: ``sh'' (\num{872} instances), as in \textit{she}; ``dcl'' (\num{757} instances), as in \textit{dark}; ``iy'' (\num{1163} instances), as in the vowel in \textit{she}; ``aa'' (\num{695} instances), as in the vowel in \textit{dark}; and ``ao'' (\num{1022} instances), as in the first vowel in \textit{water}. As in~\cite{ferraty_nonparametric_2006}, of the total of 256 frequencies in each periodogram, only the first 150 are considered, since the lowest frequencies provide the most useful information. Of the \num{4509} periodograms, \num{1000} are used for the subset $\mathcal{D}_b$ and the rest for the subset $\mathcal{D}_a$.

In Figure~\ref{fig:data-train}, a representation of the training data for the sets just presented can be seen.

\begin{figure}[htbp]
		\centering
		\captionsetup{justification=centering}
            \begin{subfigure}[t]{0.28\columnwidth}
			\centering
			\includegraphics[width=\textwidth]{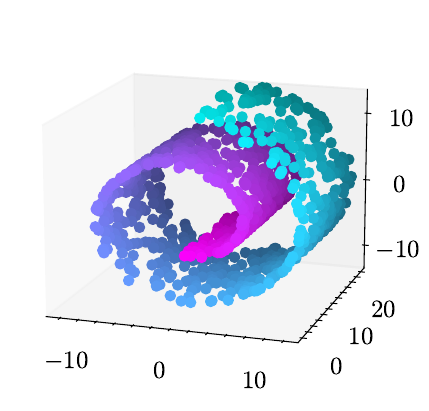}
			\caption{Swiss Roll.}
			\label{perfil-B2}
		\end{subfigure}
        \begin{subfigure}[t]{0.28\columnwidth}
			\centering
			\includegraphics[width=\textwidth]{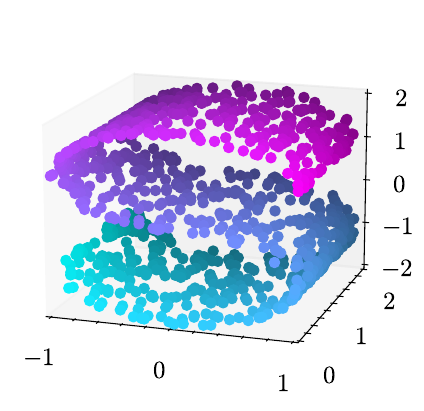}
			\caption{S Curve.}
			\label{perfil-B2}
		\end{subfigure}
		\begin{subfigure}[t]{0.28\columnwidth}
			\centering
			\includegraphics[width=\textwidth]{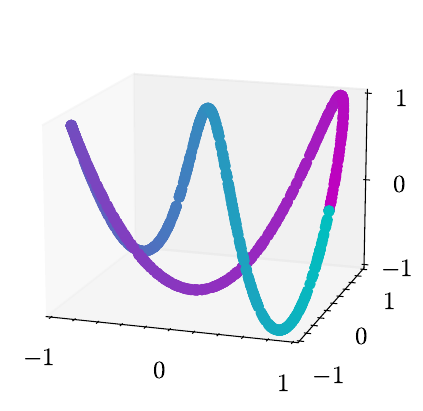}
			\caption{Helix.}
			\label{imagen-B2}
		\end{subfigure}\\
		\begin{subfigure}[t]{0.24\columnwidth}
			\centering
			\includegraphics[width=\textwidth]{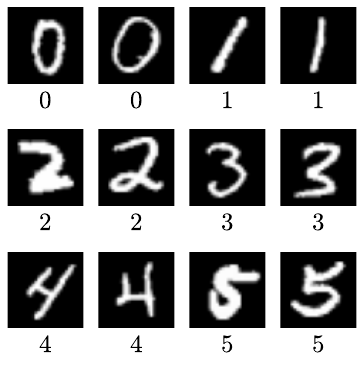}
			\caption{MNIST.}
			\label{perfil-B2}
		\end{subfigure}
		\begin{subfigure}[t]{0.24\columnwidth}
			\centering
			\includegraphics[width=\textwidth]{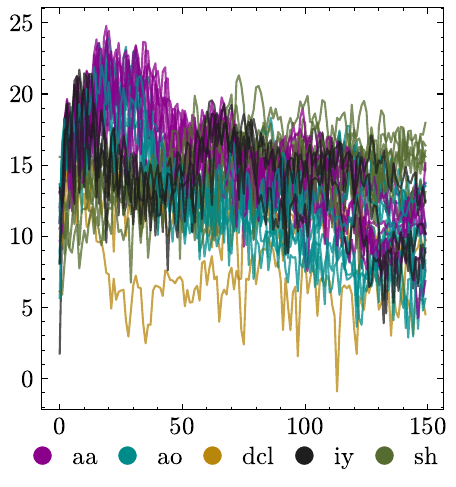}
			\caption{Phoneme.}
			\label{imagen-B3}
		\end{subfigure}
		\caption{Subset $\mathcal{D}_a$ for each data set.}
		\label{fig:data-train}
\end{figure}

\subsubsection{Neural network architectures}

In order to apply DDM to different types of data, three classes of deep neural networks have been defined:
\begin{itemize}
	\item \textbf{Dense network}: It is applied to tabular data, such as Swiss Roll, S Curve, and Helix. The network consists of three dense hidden layers, with 128, 64, and 32 neurons, respectively, and a ReLU activation function, followed by a dense output layer with linear activation and a number of neurons equal to the embedding dimension.
	\item \textbf{Convolutional network}: It is designed to process images, such as those from MNIST. The network is made up of 3 convolutional blocks, followed by a dense hidden layer and an also dense output layer. Each convolutional block is made up of two convolutional layers~\cite{le_cun_handwritten_1989}, with the same number of $3\times 3$ filters and ReLU activation, followed by a max pooling layer of dimensions $2\times 2$ and a dropout regularization layer~\cite{srivastava_dropout_2014}. The convolutional layers of the first block both have 16 filters; those of the second, 32; and those of the third, 64. On the other hand, the dense hidden layer uses the ReLU function as activation and a number of neurons equal to 16 times the dimension of the embedding. Finally, the dense output layer has linear activation and as many neurons as the number of dimensions of the embedding.
	\item \textbf{Recurrent network}: It is designed to work with sequential data, such as Phoneme. The network is made up of three bidirectional~\cite{schuster_bidirectional_1997} LSTM (Long Short-Term Memory)~\cite{hochreiter_long_1997} layers, with 128, 64 and 32 neurons, respectively, followed by a dense output layer with a number of neurons equal to the dimension of the embedding. The LSTM layers have the hyperbolic tangent as their main activation function and the sigmoid function for the recurrent part, while the dense output layer has a linear activation.
\end{itemize}

\subsubsection{Hyperparameters}\label{sec:hyperparameters}

The hyperparameters in the experiments of this work can be divided into two groups: those of DM and those related to deep learning models. The hyperparameters of DM are: $d$, the dimension of the embedding, and $\sigma$, $\alpha$ and $t$. It is a common practice to choose $\sigma$ as a certain quantile, $q\in[0, 1]$, of the Euclidean distances between data points in the original space. On the other hand, the dimension of the embedding can be chosen following the procedure proposed in~\cite{zhu_automatic_2006}. The idea is to calculate a likelihood curve that measures how different the distribution of the retained eigenvalues (raised to $t$) is with respect to that of the discarded eigenvalues as a function of $d$ and to choose the value of $d$ that maximizes said likelihood, that is, the one that maximizes the difference between both distributions. On the other hand, as far as it has been found, there is no known universally valid and objective method for selecting $q$, $\alpha$ and $t$. However, the objective of this work is not to find optimal values for these parameters ---if it is possible to define when they are optimal---, but to compare the embeddings obtained by the methods considered for reasonable values of these parameters in each data set.

In the case of the synthetic data sets (Swiss Roll, S Curve and Helix), we have taken advantage of the fact that the dimension of the original space is $D=3$ to visualize the underlying manifold of the data and choose the hyperparameters in the following way. A set of possible values for $q$ and another for $t$ have been defined, the largest value of $q$ capable of correctly unfolding the manifold has been taken and, once $q$ has been fixed, the smallest value of $t$ such that the dimension suggested for the embedding by the likelihood curve matches the expected one has been taken\footnote{Swiss Roll and S Curve are clearly manifolds of intrinsic dimension equal to $2$, while Helix is a closed one-dimensional manifold, so two dimensions are also necessary to represent the manifold without cutting it.}: in all three cases, $d=2$. For $q$, values between $0$ and \num{5E-2} have been tested with a step of \num{5E-3}, while for $t$ the values $1$, $25$, $50$ and $100$ have been tried. Due to the strongly geometric nature of Swiss Roll, S Curve and Helix, $\alpha=1$ has been taken, in order to eliminate the influence that the data sampling density has on the calculation of the embedding. The resulting combination of $q$, $t$ and $\alpha$ is also visually validated by representing the distribution of the transition probabilities in $t$ steps \eqref{eq:dm_prob} and the corresponding diffusion distances \eqref{eq:d_dif_orig} on the data in the original space and checking that they correctly capture the underlying manifold.

Regarding the real data sets (MNIST and Phoneme), since the dimension of the original space is greater than $3$ in both cases, it is not possible to visualize the manifold of the data and follow the same procedure as for the synthetic sets. For this reason, combined with the lack of another method to adjust $q$, $t$ and $\alpha$ and taking into account that the objective of the work is not to find optimal values for these parameters, it has been decided to take, in both cases, the smallest value of $q$ used in the synthetic sets and $t=1$. On the other hand, $\alpha = 0$ will be set so as not to eliminate the influence of the data sampling density, which in this case can be informative. As for the dimension of the embedding, the one suggested by the likelihood curve will be taken. The experiments will also be repeated for $d=2$, in order to visualize the relative rotation of the embedding of DDM with respect to that of DM.

Finally, regarding the hyperparameters of deep learning models, beyond those already detailed about their architecture, in all cases Adam~\cite{kingma_adam_2014} has been used as an optimization method, with a learning rate equal to \num{E-2} and a batch size of 512 instances. The number of epochs has been extended in each case until the cost function saturates\footnote{In reality, a much smaller number of epochs could have been used in all cases and very similar results would have been obtained, but it was preferred to ensure that the cost functions saturate completely or almost completely.}. Specifically, \num{5000} training epochs have been used for Swiss Roll, S Curve and Helix, while for MNIST \num{10000} have been required and for Phoneme, \num{2500}. On the other hand, from each training set, $\mathcal{D}_a$, \SI{10}{\percent} of the points have been extracted for the validation set. In the case of the convolutional network for MNIST, there is an additional hyperparameter, the dropout probability, although it has finally been set to $0$.

\subsubsection{Metrics}

To assess the quality of the results in a quantitative way, the Euclidean distances between each pair of points, $\vb*{\gamma}_i$ and $\vb*{\gamma}_j$, in the embedding offered by DDM and the Nystr\"om method will be measured, and the mean relative error,
\begin{equation}
    \mathrm{MRE} = \dfrac{2}{N(N-1)}\sum_{i < j}\dfrac{\abs{\norm{\vb*{\gamma}_i-\vb*{\gamma}_j} - \norm{\tilde{\vb*{\gamma}}_i-\tilde{\vb*{\gamma}}_j}}}{\norm{\tilde{\vb*{\gamma}}_i-\tilde{\vb*{\gamma}}_j}},
\end{equation}
of these distances with respect to the distances between the same points in the embedding of DM, $\tilde{\vb*{\gamma}}_i$ and $\tilde{\vb*{\gamma}}_j$, will be calculated. This quantity will be calculated both on the subset $\mathcal{D}_a$ and on $\mathcal{D}_b$. Note that this error measure is invariant to orthogonal transformations $\vb{R}\in O(d)$ of the embedding, so it allows comparing the results of DDM with those of DM.
\subsection{Results}\label{subsec:results}
In this section, the results obtained in the experiments outlined in the previous section are presented. First, Table~\ref{tab:hyperparameters} contains the selected values of $d$, $q$, $\alpha$ and $t$ for each data set as explained in Section~\ref{sec:hyperparameters}.

\begin{table}[htbp]
	\centering
    \captionsetup{justification=centering}
    \caption{Diffusion Maps hyperparameter values.}
	\label{tab:hyperparameters}
		\begin{tabular}{l S[table-format=1] c S[table-format=1] S[table-format=3]}
			\toprule
			\textbf{Data set} &   $\vb*{d}$ &   $\vb*{q}$ & $\vb*{\alpha}$ & $\vb*{t}$\\
			\midrule
			Swiss Roll & 2 & \num{5E-3} & 1 & 100\\
            S Curve & 2 & \num{5E-3} & 1 & 100\\
			Helix & 2 & \num{3E-2} & 1 & 100\\
			MNIST & 6 &  \num{5E-3} & 0 & 1\\
			Phoneme & 2 & \num{5E-3} & 0 & 1\\
			\bottomrule
		\end{tabular}
\end{table}


In Figure~\ref{fig:probs-a}, the transition probability distributions \eqref{eq:dm_prob} with respect to the point marked in yellow are shown for the three synthetic data sets (Swiss Roll, S Curve and Helix), obtained using the hyperparameters in Table~\ref{tab:hyperparameters}. In Figure~\ref{fig:dists-a}, the same is done for the diffusion distance \eqref{eq:d_dif_orig}, where the reference point is marked, again, in yellow. Note that darker colors denote smaller values in both cases. As it can be seen, the choice of hyperparameters seems appropriate, as it correctly captures the local geometry of the data, that is, the probabilities and distances accurately reflect the proximity between points within the underlying manifold.

\begin{figure}[htbp]
		\centering
		\captionsetup{justification=centering}
            \begin{subfigure}[t]{0.28\columnwidth}
			\centering
			\includegraphics[width=\textwidth]{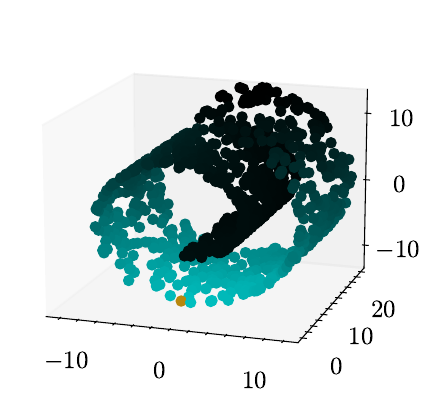}
			\caption{Swiss Roll.}
			\label{perfil-B2}
		\end{subfigure}
        \begin{subfigure}[t]{0.28\columnwidth}
			\centering
			\includegraphics[width=\textwidth]{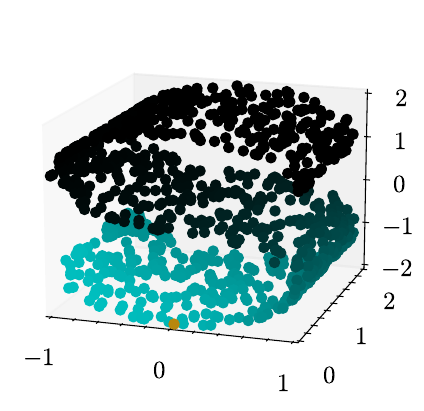}
			\caption{S Curve.}
			\label{perfil-B2}
		\end{subfigure}
		\begin{subfigure}[t]{0.28\columnwidth}
			\centering
			\includegraphics[width=\textwidth]{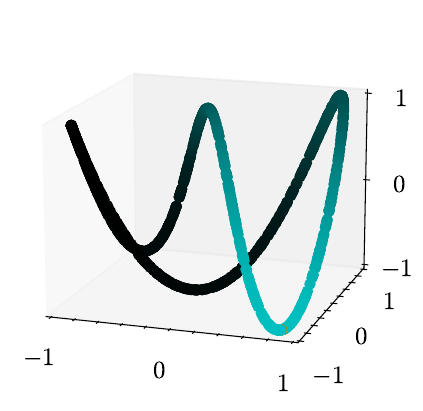}
			\caption{Helix.}
			\label{imagen-B2}
		\end{subfigure}
		\caption{Transition probabilities in $t$ steps with respect to the yellow point for the set $\mathcal{D}_a$.}
		\label{fig:probs-a}
\end{figure}

\begin{figure}[htbp]
		\centering
		\captionsetup{justification=centering}
            \begin{subfigure}[t]{0.28\columnwidth}
			\centering
			\includegraphics[width=\textwidth]{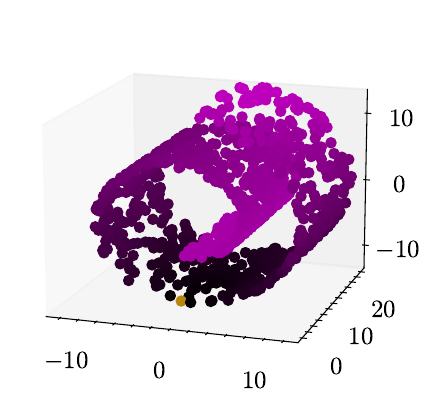}
			\caption{Swiss Roll.}
			\label{perfil-B2}
		\end{subfigure}
        \begin{subfigure}[t]{0.28\columnwidth}
			\centering
			\includegraphics[width=\textwidth]{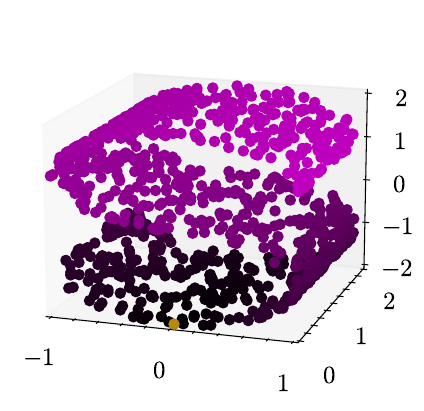}
			\caption{S Curve.}
			\label{perfil-B2}
		\end{subfigure}
		\begin{subfigure}[t]{0.28\columnwidth}
			\centering
			\includegraphics[width=\textwidth]{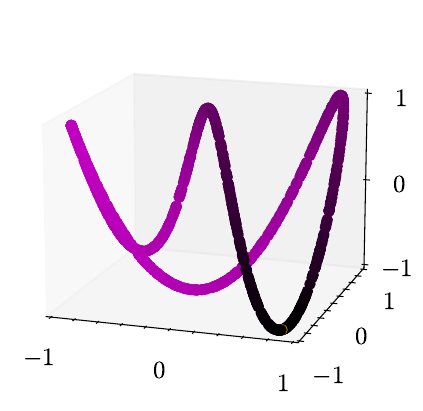}
			\caption{Helix.}
			\label{imagen-B2}
		\end{subfigure}
		\caption{Diffusion distances in $t$ steps with respect to the yellow point for the set $\mathcal{D}_a$.}
		\label{fig:dists-a}
\end{figure}

In Figures~\ref{fig:swiss_roll-test}, \ref{fig:s_curve-test}, \ref{fig:helix-test}, \ref{fig:mnist-test} and \ref{fig:phoneme-test}, the embeddings obtained by the three methods studied for the subset $\mathcal{D}_b$ (test set) of Swiss Roll, S Curve, Helix, MNIST and Phoneme, respectively, are collected. In the case of Swiss Roll, S Curve and Helix, it can be seen that the choice of $q$ has been adequate to correctly unfold the manifold, although for Swiss Roll and S Curve it has not been possible to find a value of $q$ that respects the width of the respective manifolds, instead of collapsing it to one dimension, as in Figures~\ref{fig:swiss_roll-test} and \ref{fig:s_curve-test}. The reason why these manifolds are projected as curves rather than surfaces is that the connectivity differences with other nodes between points distributed across these manifolds are negligible compared to the connectivity differences between points along them. Consequently, the diffusion distances between points distributed across these manifolds are much smaller than those between points distributed along them. On the other hand, both the Nystr\"om method and DDM seem to replicate quite faithfully the projection given by DM in all data sets, although the embedding of DDM presents, in general, a rotation and/or a reflection already contemplated with respect to that of DM, and also some more noise, due to a lack of total convergence to the global minimum of the cost function, something unavoidable in deep learning methods.

\begin{figure}[htbp]
		\centering
		\captionsetup{justification=centering}
            \begin{subfigure}[t]{0.19\columnwidth}
			\centering
			\includegraphics[width=\textwidth]{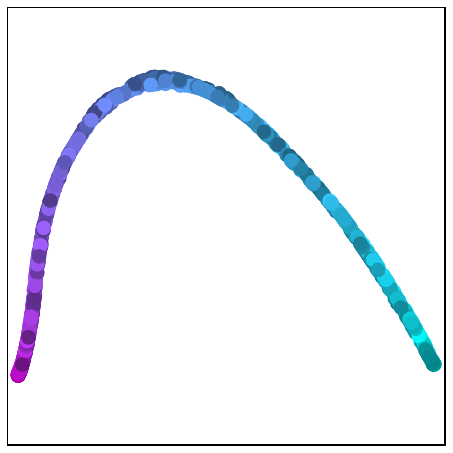}
			\caption{Nystr\"om.}
			\label{perfil-B2}
		\end{subfigure}
		\begin{subfigure}[t]{0.19\columnwidth}
			\centering
			\includegraphics[width=\textwidth]{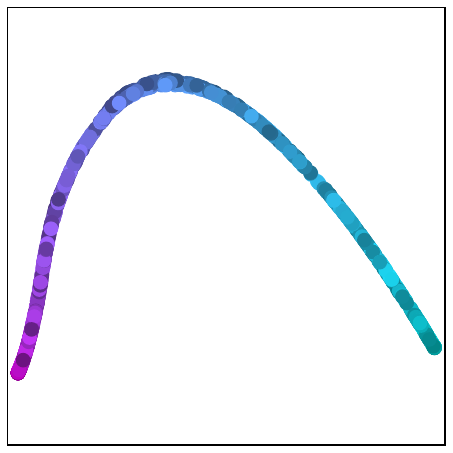}
			\caption{DM.}
			\label{imagen-B2}
		\end{subfigure}
		\begin{subfigure}[t]{0.19\columnwidth}
			\centering
			\includegraphics[width=\textwidth]{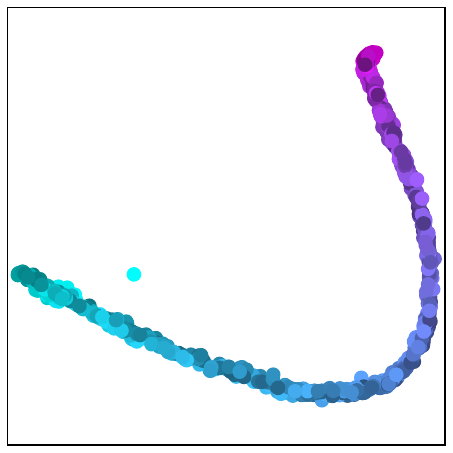}
			\caption{DDM.}
			\label{perfil-B2}
		\end{subfigure}
		\caption{Embedding of $\mathcal{D}_b$ for Swiss Roll.}
		\label{fig:swiss_roll-test}
\end{figure}

\begin{figure}[htbp]
		\centering
		\captionsetup{justification=centering}
            \begin{subfigure}[t]{0.19\columnwidth}
			\centering
			\includegraphics[width=\textwidth]{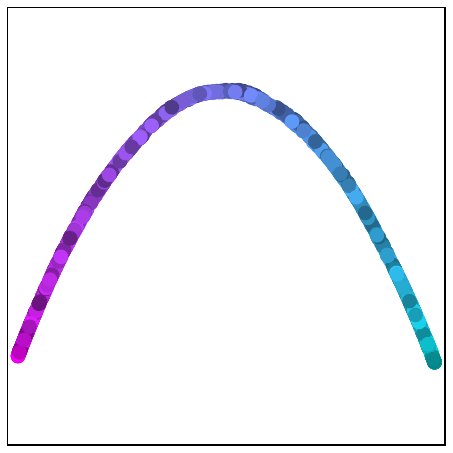}
			\caption{Nystr\"om.}
			\label{perfil-B2}
		\end{subfigure}
		\begin{subfigure}[t]{0.19\columnwidth}
			\centering
			\includegraphics[width=\textwidth]{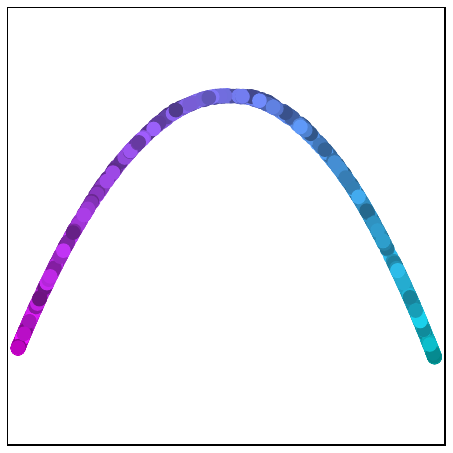}
			\caption{DM.}
			\label{imagen-B2}
		\end{subfigure}
		\begin{subfigure}[t]{0.19\columnwidth}
			\centering
			\includegraphics[width=\textwidth]{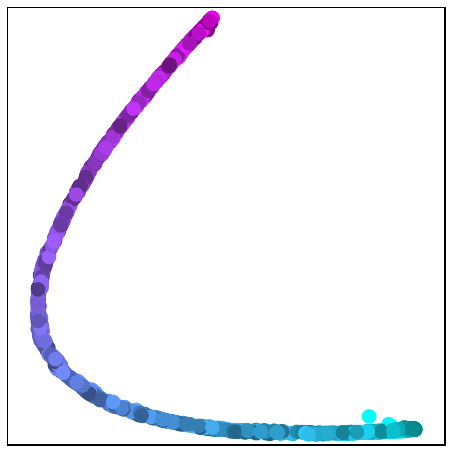}
			\caption{DDM.}
			\label{perfil-B2}
		\end{subfigure}
		\caption{Embedding of $\mathcal{D}_b$ for S Curve.}
		\label{fig:s_curve-test}
\end{figure}

\begin{figure}[htbp]
		\centering
		\captionsetup{justification=centering}
            \begin{subfigure}[t]{0.19\columnwidth}
			\centering
			\includegraphics[width=\textwidth]{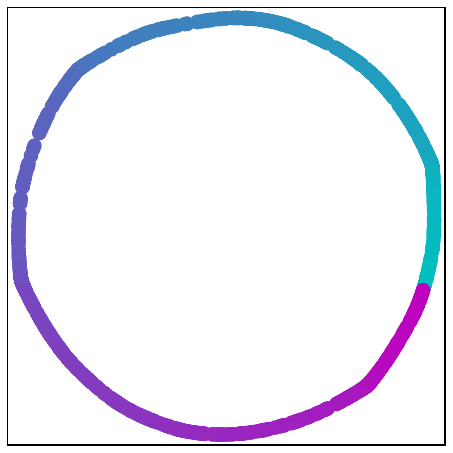}
			\caption{Nystr\"om.}
			\label{perfil-B2}
		\end{subfigure}
		\begin{subfigure}[t]{0.19\columnwidth}
			\centering
			\includegraphics[width=\textwidth]{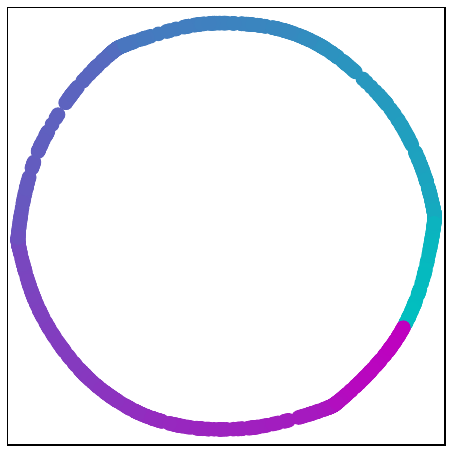}
			\caption{DM.}
			\label{imagen-B2}
		\end{subfigure}
		\begin{subfigure}[t]{0.19\columnwidth}
			\centering
			\includegraphics[width=\textwidth]{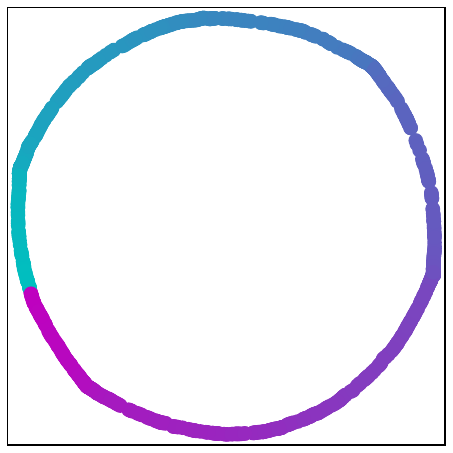}
			\caption{DDM.}
			\label{perfil-B2}
		\end{subfigure}
		\caption{Embedding of $\mathcal{D}_b$ for Helix.}
		\label{fig:helix-test}
\end{figure}

\begin{figure}[htbp]
		\centering
		\captionsetup{justification=centering}
            \begin{subfigure}[t]{0.19\columnwidth}
			\centering
			\includegraphics[width=\textwidth]{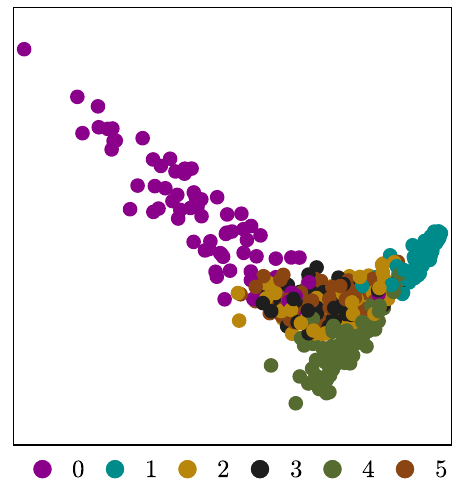}
			\caption{Nystr\"om.}
			\label{perfil-B2}
		\end{subfigure}
		\begin{subfigure}[t]{0.19\columnwidth}
			\centering
			\includegraphics[width=\textwidth]{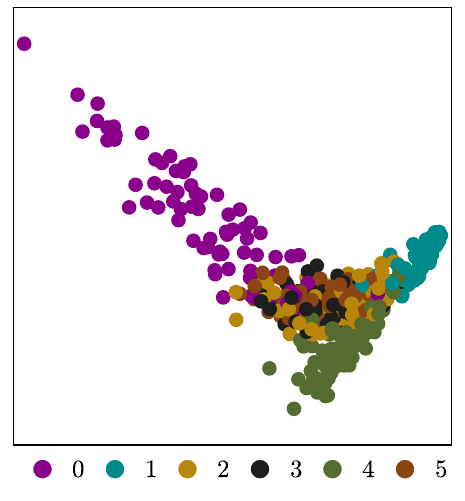}
			\caption{DM.}
			\label{imagen-B2}
		\end{subfigure}
		\begin{subfigure}[t]{0.19\columnwidth}
			\centering
			\includegraphics[width=\textwidth]{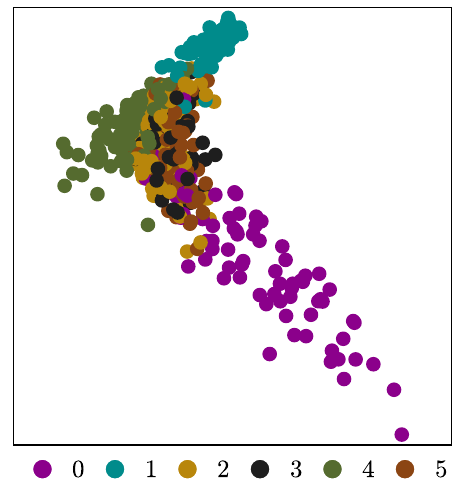}
			\caption{DDM.}
			\label{perfil-B2}
		\end{subfigure}
		\caption{Embedding of $\mathcal{D}_b$ for MNIST ($d=2$).}
		\label{fig:mnist-test}
\end{figure}

\begin{figure}[htbp]
		\centering
		\captionsetup{justification=centering}
            \begin{subfigure}[t]{0.19\columnwidth}
			\centering
			\includegraphics[width=\textwidth]{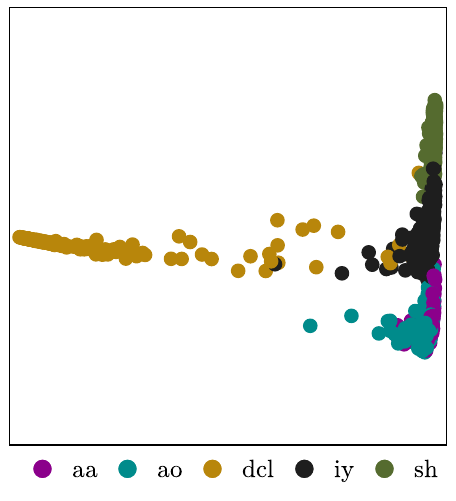}
			\caption{Nystr\"om.}
			\label{perfil-B2}
		\end{subfigure}
		\begin{subfigure}[t]{0.19\columnwidth}
			\centering
			\includegraphics[width=\textwidth]{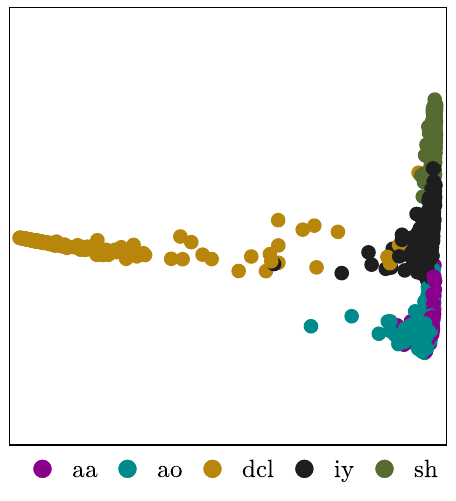}
			\caption{DM.}
			\label{imagen-B2}
		\end{subfigure}
		\begin{subfigure}[t]{0.19\columnwidth}
			\centering
			\includegraphics[width=\textwidth]{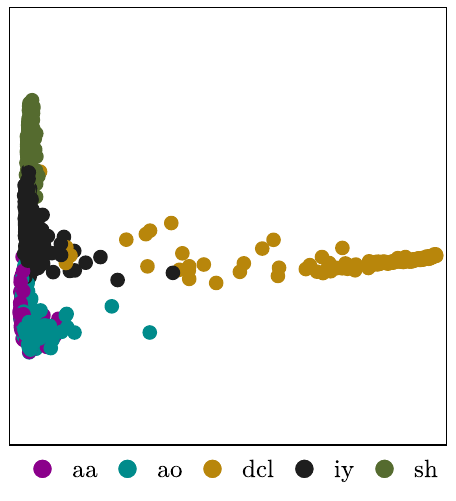}
			\caption{DDM.}
			\label{perfil-B2}
		\end{subfigure}
		\caption{Embedding of $\mathcal{D}_b$ for Phoneme.}
		\label{fig:phoneme-test}
\end{figure}

Finally, Table~\ref{tab:mre} shows the \SI{95}{\percent} confidence intervals using bootstrap percentiles for the mean relative error (MRE). It is clear that the Nystr\"om method replicates the results of the original DM algorithm better than DDM. However, the results of DDM are also acceptable. This is demonstrated by the results in Figures~\ref{fig:swiss_roll-test}, \ref{fig:s_curve-test}, \ref{fig:helix-test}, \ref{fig:mnist-test} and \ref{fig:phoneme-test}. In fact, as shown in Figure~\ref{fig:mre_by_decile} in the Appendix~\ref{sec:mre_by_decile}, the largest contribution to the MRE of DDM is by far found in the first decile of distances in the embedding, where the distances between points are so small that a slight deviation results in a very high value of the relative error. This means that, overall, DDM manages to replicate the embedding of DM considerably well for medium and large scales, but it shows significant differences in the distances between nearest neighbors, although these are imperceptible to the naked eye in most cases. For this reason, data sets such as Swiss Roll or MNIST present a relatively high MRE in the case of DDM, even though their embeddings are visually very similar to those of DM and the Nystr\"om method.

\begin{table}[htbp]
	\centering
        \captionsetup{justification=centering}
        \caption{95 \% confidence intervals using bootstrap percentiles for the mean relative error (MRE).}
	\label{tab:mre}
        \captionsetup{justification=centering}
	\begin{subtable}[t]{0.49\columnwidth}
		\centering
        \caption{Over $\mathcal{D}_a$.}
		\begin{tabular}{l c c}
			\toprule
			\textbf{Data set} & \textbf{Nystr\"om} & \textbf{DDM}\\
			\midrule
			Swiss Roll & $(10.5, 11.3)\,\%$  & $(15.0, 16.0)\,\%$ \\
                S Curve & $(9.2, 10.0)\,\%$  & $(9.4, 10.2)\,\%$ \\
			Helix & $(1.9, 1.9)\,\%$  & $(2.0, 2.1)\,\%$ \\
			MNIST ($d=2$) & $(2.5, 2.5)\,\%$  & $(15.5, 15.5)\,\%$ \\
			MNIST ($d=6$) & $(0.6, 0.6)\,\%$  & $(7.6, 7.6)\,\%$ \\
			Phoneme & $(0.8, 0.8)\,\%$  & $(6.2, 6.2)\,\%$ \\
			\bottomrule
		\end{tabular}
        \end{subtable}
        \mbox{}
        \begin{subtable}[t]{0.49\columnwidth}
		\centering
        \caption{Over $\mathcal{D}_b$.}
		\begin{tabular}{l c c}
			\toprule
			\textbf{Data set} & \textbf{Nystr\"om} & \textbf{DDM}\\
			\midrule
			Swiss Roll & $(10.8, 11.5)\,\%$  & $(20.2, 21.9)\,\%$ \\
                S Curve & $(9.5, 10.1)\,\%$  & $(10.7, 11.7)\,\%$ \\
			Helix & $(1.9, 1.9)\,\%$  & $(2.1, 2.1)\,\%$ \\
			MNIST ($d=2$) & $(2.7, 2.8)\,\%$  & $(17.2, 17.7)\,\%$ \\
			MNIST ($d=6$) & $(1.2, 1.2)\,\%$  & $(8.4, 8.5)\,\%$ \\
			Phoneme & $(0.8, 0.8)\,\%$  & $(6.1, 6.3)\,\%$ \\
			\bottomrule
		\end{tabular}
        \end{subtable}
\end{table}
\section{Conclusions}\label{sec:conclusions}

In this work, we propose to alleviate the problems of Diffusion Maps (DM), related to its computational cost and its inability to process data outside the initial set, by resorting to deep learning, as an alternative to the classic Nystr\"om method. To this end, we offer a new formulation of the DM embedding as a solution to a certain unconstrained minimization problem and, based on it, a cost function with which to train a neural network to obtain the DM embedding, for points both inside and outside the training sample and without the need to perform any spectral decomposition. The resulting method has been called Deep Diffusion Maps (DDM). Furthermore, the capabilities of this approach on different data sets, both real and synthetic, have been compared against those of DM and the Nystr\"om method.

While Nystr\"om's method appears to more closely replicate the results of DM from a quantitative point of view, DDM has proven capable of generating results almost identical to those of both methods at first glance, that is, from a qualitative point of view. The discrepancies observed in the mean relative error are concentrated in the smallest distances between points, between the nearest neighbors, which are almost imperceptible visually, but which result in very high relative error values. It is possible that, with somewhat more sophisticated training architectures and strategies than those employed in this work, the results in cases such as Swiss Roll and MNIST could be improved and MRE values closer to those of the Nystr\"om method could be obtained. In any case, the results are considerably good, and the main advantage of DDM lies in its potential gain in terms of computational cost~\cite{jansen_scalable_2017}, especially in the inference phase, that is, on data outside the training set. DDM could even become an interesting option from a training cost perspective: in the case of Nystr\"om, the more training data available, the longer the training time, while in the case of DDM, a larger number of data can reduce the number of epochs required and result in approximately the same training time. It should also be noted that the time advantage offered by neural methods is drastically increased when using specialized hardware such as GPUs or TPUs. Finally, the DDM cost function allows for easy integration of DM as a dimensionality reduction method with other deep learning methods, perhaps as a regularization term, which may be interesting for certain applications.

Regarding the limitations of DDM, since it is an iterative method, it can be more difficult to tune or train than DM or the Nystr\"om method. In the case of DDM, the training result depends on the initialization of the parameters and a greater number of hyperparameters, such as the architecture, batch size, learning rate, etc. On the other hand, it has been observed that, in some cases, when the gradient of the eigenvalue curve is very small (almost flat), as in Helix or Swiss Roll when $t=1$, DDM has difficulties converging to the global minimum of the cost function. However, it seems that this can be alleviated by adjusting certain hyperparameters, for example, increasing the batch size and reducing the learning rate. Despite all this, it can be concluded that DDM has proven to be a good alternative to the original DM algorithm and the Nystr\"om method in contexts where their computational cost is too high.
\FloatBarrier
\appendix
\section{General Nystr\"om method}\label{sec:general_nystrom}

\subsection{Eigenvectors and eigenvalues of one sample from another}
Let $\mathcal{H}_p$ be the Hilbert space of functions $f:\R^D\to\R$ such that
\begin{equation}
	\int f^2(\vb{x}) p(\vb{x})\dd{\vb{x}} < +\infty,
\end{equation}
where $p:\R^D\to\R$ is a certain probability density function. Let $\mathcal{K}:\R^D\times\R^D\to\R$ be a kernel function and let $\Bqty{u_i:\R^D\to\R}_{i}$ be the eigenfunctions of the operator it defines, i.e.,
\begin{equation}
	\int \mathcal{K}(\vb{x},\vb{y})u_i(\vb{y})p(\vb{y})\dd{\vb{y}} = \lambda_i u_i(\vb{x}).
\end{equation}
Since $\mathcal{K}$ is real and symmetric, the eigenfunctions are orthonormal:
\begin{equation}
	\int u_i(\vb{x})u_j(\vb{x})p(\vb{x})\dd{\vb{x}} = \delta_{i, j}.
\end{equation}
These integrals can be viewed as expected values according to the distribution defined by $p$. Therefore, they can be approximated from a sample, $\Bqty{\vb{x}_{k}}_{k=1}^M$, i.i.d. of $p$ by replacing the expected value by the empirical mean:
\begin{gather}\label{eq:media_empirica_nys}
	\dfrac{1}{M}\sum_{k=1}^M\mathcal{K}\pqty{\vb{x}, \vb{x}_{k}}u_i\pqty{\vb{x}_{k}}\approx\lambda_iu_i(\vb{x}),\\
	\label{eq:ortonormalidad_empirica_nys}
	\dfrac{1}{M}\sum_{k=1}^M u_i\pqty{\vb{x}_{k}}u_j\pqty{\vb{x}_{k}}\approx\delta_{ij}.
\end{gather}

Now consider the matrix of dimensions $M\times M$ given by
\begin{equation}
	K^{\aqty{M}}_{k, l} = \mathcal{K}\pqty{\vb{x}_{k},\vb{x}_{l}},\qquad k, l = 1, \ldots, M.
\end{equation}
Because it is real and symmetric, it has a set of $M$ eigenvalues
and orthonormal eigenvectors,
both real, given by
\begin{equation}
	\vb{K}^{\aqty{M}} \vb{u}^{\aqty{M}}_i = \lambda_i^{\aqty{M}}\vb{u}^{\aqty{M}}_i,
\end{equation}
or, in terms of components, by
\begin{equation}\label{eq:ec_autovalores_K_nys}
	\sum_{k=1}^M K^{\aqty{M}}_{j, k} \pqty{\vb{u}^{\aqty{M}}_i}_k = \lambda_i^{\aqty{M}}\pqty{\vb{u}^{\aqty{M}}_i}_j,
\end{equation}
for $i, j = 1,\ldots, M$. 
On the other hand, replacing $\vb{x}$ by $\vb{x}_{j}$ in \eqref{eq:media_empirica_nys}, we can write
\begin{equation}
	\dfrac{1}{M}\sum_{k=1}^MK^{\aqty{M}}_{j, k}u_i\pqty{\vb{x}_{k}}\approx\lambda_iu_i\pqty{\vb{x}_{j}}.
\end{equation}
Comparing this expression with \eqref{eq:ec_autovalores_K_nys}, and taking into account \eqref{eq:ortonormalidad_empirica_nys}, we have that
\begin{gather}
	u_i\pqty{\vb{x}_{j}}\approx\sqrt{M} \pqty{\vb{u}^{\aqty{M}}_i}_j,\label{eq:nys_aprox_1}\\
    \lambda_i\approx \dfrac{1}{M}\lambda_i^{\aqty{M}}.\label{eq:nys_aprox_1_eigenvalues}
\end{gather}
Substituting this back into \eqref{eq:media_empirica_nys}, we can solve for
\begin{equation}\label{eq:nys_aprox_2}
	u_i\pqty{\vb{x}}\approx\dfrac{\sqrt{M}}{\lambda_i^{\aqty{M}}}\sum_{k=1}^M\mathcal{K}\pqty{\vb{x}, \vb{x}_{k}}\pqty{\vb{u}^{\aqty{M}}_i}_k.
\end{equation}

Suppose now that we have a second sample, $\Bqty{\tilde{\vb{x}}_{k}}_{k=1}^{N}$, of $p$ with $N$ elements. Then we can approximate the value of $u_i$ at each of the points in the second sample in two different ways: with \eqref{eq:nys_aprox_1}, using the second sample, and with \eqref{eq:nys_aprox_2}, using the first one. Equating the two approximations, we obtain
\begin{equation}\label{eq:nys_eigenvectors}
	\pqty{\tilde{\vb{u}}^{\aqty{N}}_i}_j \approx \dfrac{\sqrt{M/N}}{\lambda_i^{\aqty{M}}}\sum_{k=1}^M\mathcal{K}\pqty{\tilde{\vb{x}}_{j}, \vb{x}_{k}}\pqty{\vb{u}^{\aqty{M}}_i}_k,
\end{equation}
with $ 1\leq i\leq M$ and $1\leq j\leq N$. 
As for the eigenvalues, equating the approximations obtained using \eqref{eq:nys_aprox_1_eigenvalues} with each sample, we have that
\begin{equation}\label{eq:nys_eigenvalues}
	\tilde{\lambda}_i^{\aqty{N}}\approx\dfrac{N}{M}\lambda_i^{\aqty{M}}.
\end{equation}
According to~\cite{williams_using_2000}, when $N > M$, these approximations reduce the memory cost of computing $M$ eigenvectors and eigenvalues of $\vb{K}^{\aqty{N}}$ from $\order{N^2}$ to $\order{NM}$ and the computational complexity from $\order{N^3}$ to $\order{M^3 + NM^2} = \order{NM^2}$.

\subsection{Eigenvectors and eigenvalues of an extended sample}

The above approximations can also be seen as a way to extend the eigenvalues and eigenvectors of a kernel matrix when the initial data sample is extended with new points.
To see this, consider that we have two samples, $\Bqty{\vb{x}_{k}}_{k=1}^{M}$ and $\Bqty{\vb{x}_{k}}_{k=M+1}^{N+M}$, from the same distribution, $p$, and we want to extend the eigenvectors and eigenvalues of the first sample to the union of the two samples, $\Bqty{\vb{x}_{k}}_{k=1}^{N+M}$. Then, applying \eqref{eq:nys_eigenvectors} and \eqref{eq:nys_eigenvalues}, we obtain
\begin{gather}
	\pqty{\vb{u}^{\aqty{N+M}}_i}_j \approx \dfrac{\sqrt{M/(N+M)}}{\lambda_i^{\aqty{M}}}\sum_{k=1}^M\mathcal{K}\pqty{\vb{x}_{j}, \vb{x}_{k}}\pqty{\vb{u}^{\aqty{M}}_i}_k,\\
	\lambda_i^{\aqty{N+M}}\approx\dfrac{N + M}{M}\lambda_i^{\aqty{M}},
\end{gather}
for $i = 1,\ldots, M$ and $j = 1, \ldots, N+M$. When $N/M\to 0$, we have that
\begin{gather}
	\pqty{\vb{u}^{\aqty{N+M}}_i}_j \approx \dfrac{1}{\lambda_i^{\aqty{M}}}\sum_{k=1}^M\mathcal{K}\pqty{\vb{x}_{j}, \vb{x}_{k}}\pqty{\vb{u}^{\aqty{M}}_i}_k,\\
	\lambda_i^{\aqty{N+M}}\approx\lambda_i^{\aqty{M}}.
\end{gather}
This is precisely the approach proposed in~\cite{bengio_out--sample_2003}.
\section{Eigenvalue and likelihood curves}

In Figure~\ref{fig:eigenvalues-loglikelihood}, for the subset $\mathcal{D}_a$ of each data set, the curve with the first 25 eigenvalues (excluding the eigenvalue equal to $1$) ordered from largest to smallest and the curve with the logarithm of the likelihood as a function of the dimension, $d$, of the embedding are collected, as in \cite{zhu_automatic_2006}.

\begin{figure}[H]
		\centering
		\captionsetup{justification=centering}
            \begin{subfigure}[t]{0.45\columnwidth}
			\centering
			\includegraphics[width=\textwidth]{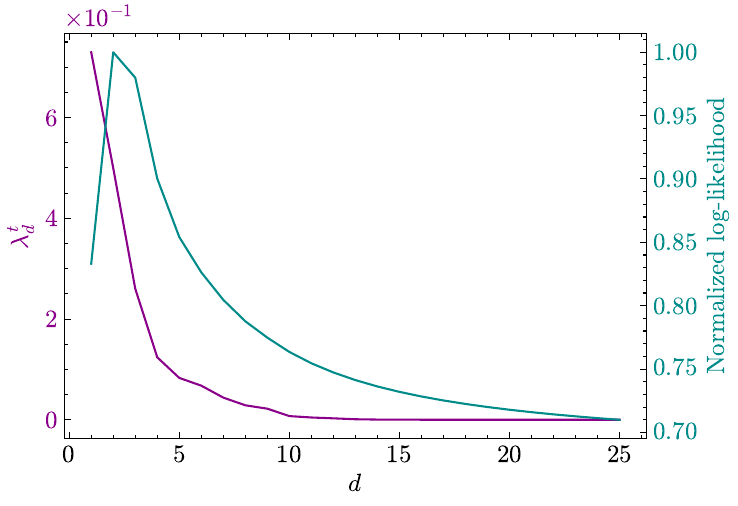}
			\caption{Swiss Roll.}
			\label{perfil-B2}
		\end{subfigure}
		\begin{subfigure}[t]{0.45\columnwidth}
			\centering
			\includegraphics[width=\textwidth]{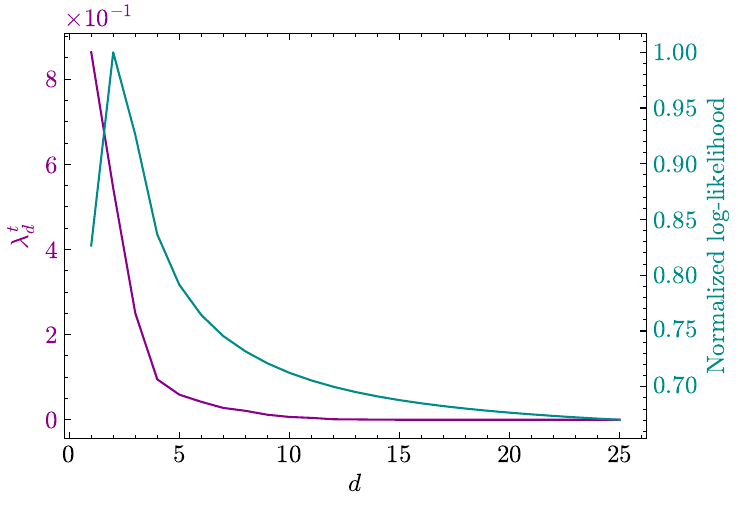}
			\caption{S Curve.}
			\label{imagen-B2}
		\end{subfigure}
        \begin{subfigure}[t]{0.45\columnwidth}
			\centering
			\includegraphics[width=\textwidth]{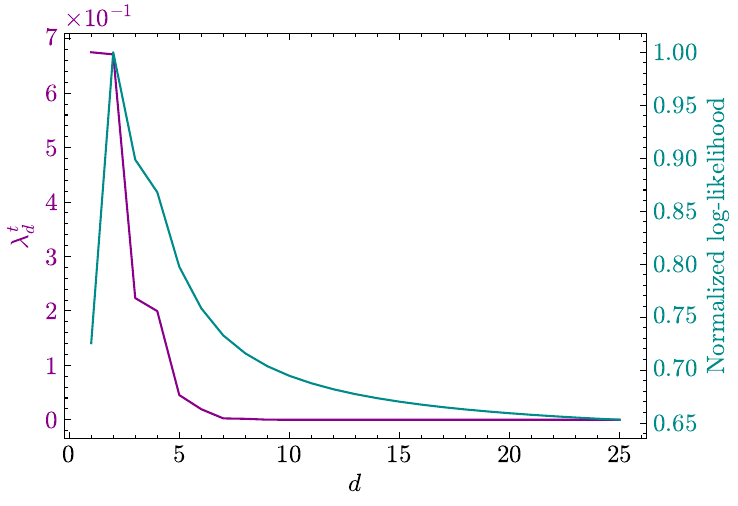}
			\caption{Helix.}
			\label{imagen-B2}
		\end{subfigure}
		\begin{subfigure}[t]{0.45\columnwidth}
			\centering
			\includegraphics[width=\textwidth]{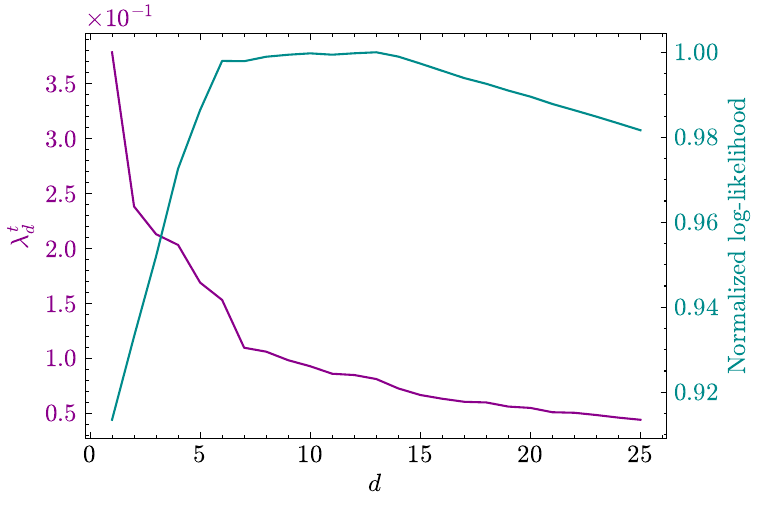}
			\caption{MNIST.}
			\label{perfil-B2}
		\end{subfigure}
        \caption{Eigenvalue and log-likelihood curves for $\mathcal{D}_a$.}
		\label{fig:eigenvalues-loglikelihood}
\end{figure}
\begin{figure}[H]
\ContinuedFloat
		\centering
		\captionsetup{justification=centering}
		\begin{subfigure}[t]{0.45\columnwidth}
			\centering
			\includegraphics[width=\textwidth]{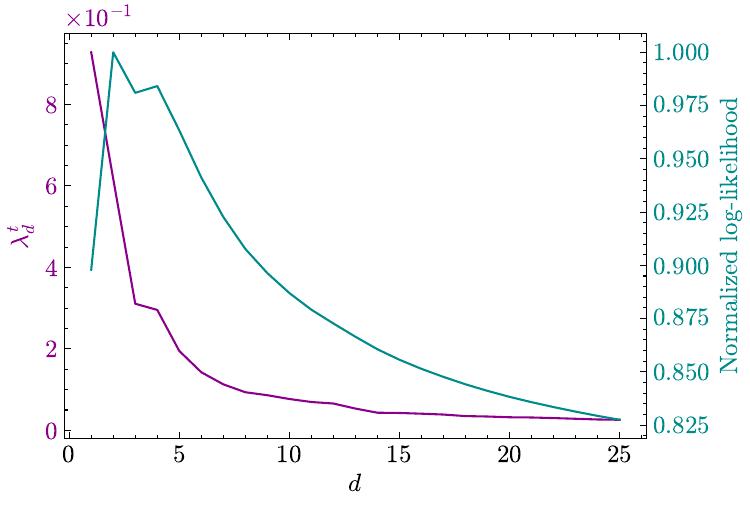}
			\caption{Phoneme.}
			\label{imagen-B3}
		\end{subfigure}
		\caption{Eigenvalue and log-likelihood curves for $\mathcal{D}_a$.}
\end{figure}
\section{MRE by decile of distances}\label{sec:mre_by_decile}

In Figure~\ref{fig:mre_by_decile}, the MRE value of Deep Diffusion Maps and the Nyström method is represented for each decile of (Euclidean) distances in the Diffusion Maps embedding.

\begin{figure}[H]
		\centering
		\captionsetup{justification=centering}
            \begin{subfigure}[t]{0.45\columnwidth}
			\centering
			\includegraphics[width=\textwidth]{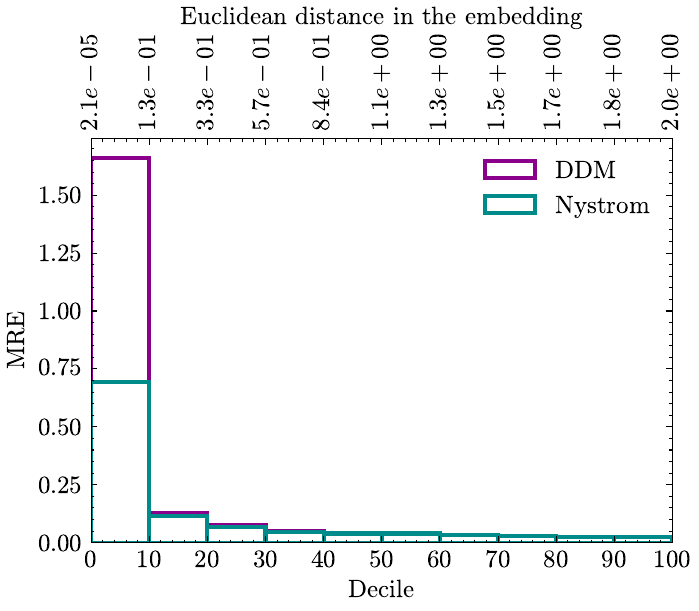}
			\caption{Swiss Roll.}
			\label{perfil-B2}
		\end{subfigure}
        \begin{subfigure}[t]{0.45\columnwidth}
			\centering
			\includegraphics[width=\textwidth]{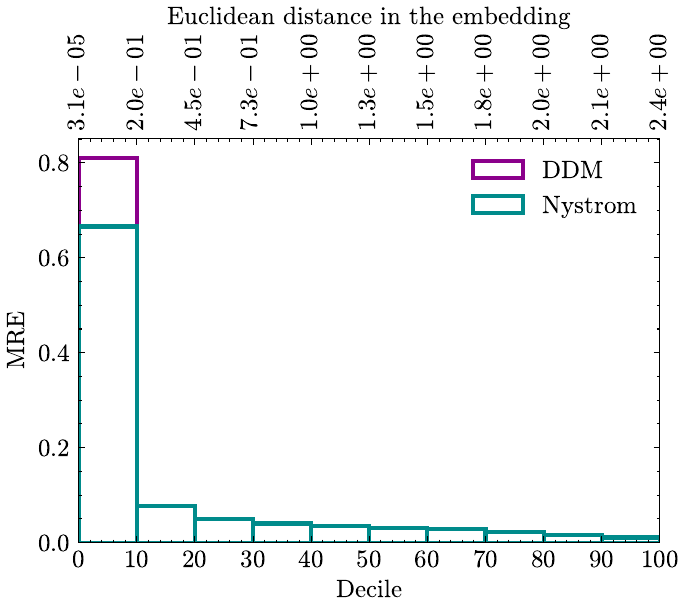}
			\caption{S Curve.}
			\label{perfil-B2}
		\end{subfigure}
		\begin{subfigure}[t]{0.45\columnwidth}
			\centering
			\includegraphics[width=\textwidth]{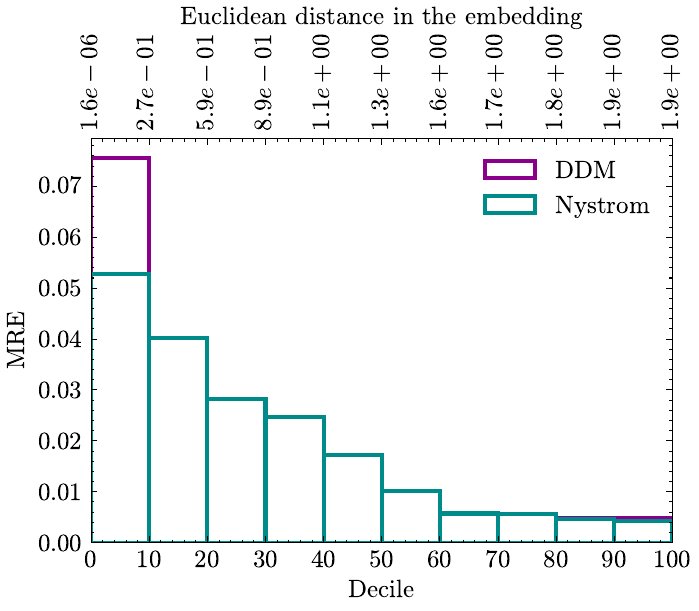}
			\caption{Helix.}
			\label{imagen-B2}
		\end{subfigure}
    \begin{subfigure}[t]{0.45\columnwidth}
        \centering
        \includegraphics[width=\textwidth]{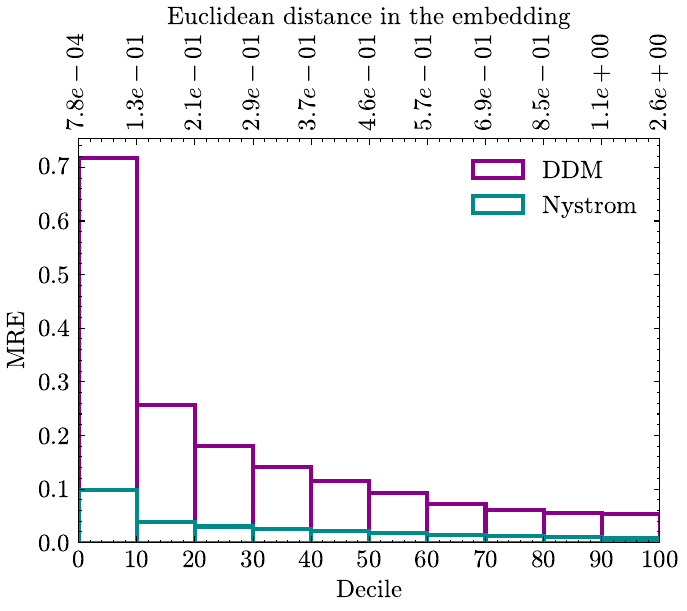}
        \caption{MNIST ($d=2$).}
        \label{perfil-B2}
    \end{subfigure}
    \caption{MRE per decile of Euclidean distances in the Diffusion Maps embedding for $\mathcal{D}_b$.}
		\label{fig:mre_by_decile}
\end{figure}
\begin{figure}[H]
\ContinuedFloat
    \centering
    \captionsetup{justification=centering}
    \begin{subfigure}[t]{0.45\columnwidth}
			\centering
			\includegraphics[width=\textwidth]{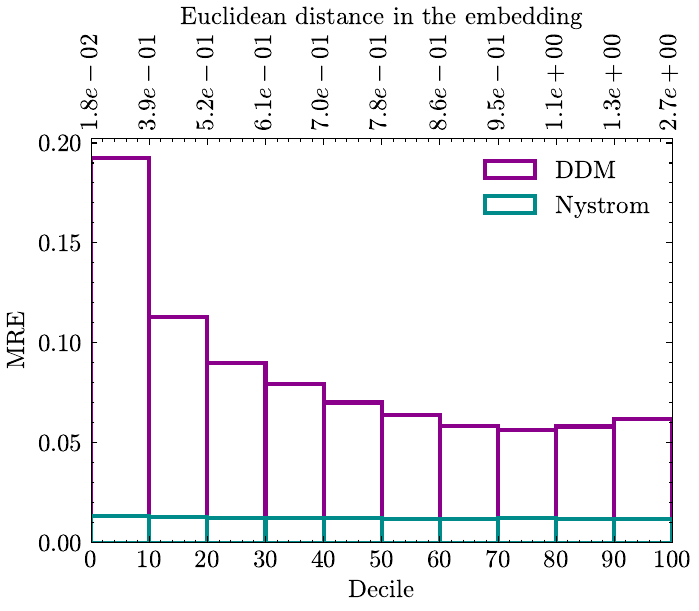}
			\caption{MNIST ($d=6$).}
			\label{perfil-B2}
		\end{subfigure}
		\begin{subfigure}[t]{0.45\columnwidth}
			\centering
			\includegraphics[width=\textwidth]{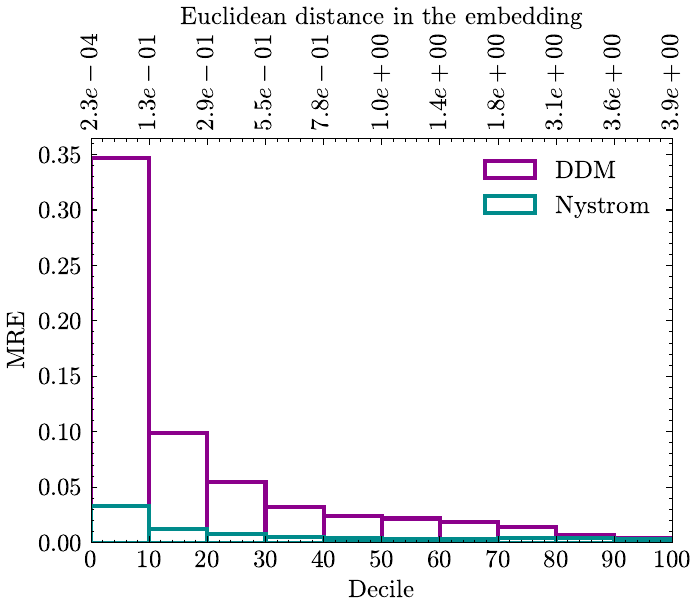}
			\caption{Phoneme.}
			\label{imagen-B3}
		\end{subfigure}
		\caption{MRE per decile of Euclidean distances in the Diffusion Maps embedding for $\mathcal{D}_b$.}
\end{figure}
\section*{Acknowledgements}
The authors acknowledge financial support from project PID2022-139856NB-I00 funded by MCIN/ AEI / 10.13039/501100011033 / FEDER, UE and project IDEA-CM (TEC-2024/COM-89) from the Autonomous Community of Madrid and from the ELLIS Unit Madrid.  The authors acknowledge computational support from the Centro de Computación Científica-Universidad Autónoma de Madrid (CCC-UAM).
\FloatBarrier
\bibliographystyle{plain}
\bibliography{biblio}

\end{document}